\documentclass{article} 
\usepackage{iclr2027_conference,times}

\usepackage{amsmath,amsfonts,bm}

\def\eqref#1{equation~\ref{#1}}

\def\1{\bm{1}}

\DeclareMathAlphabet{\mathsfit}{\encodingdefault}{\sfdefault}{m}{sl}
\SetMathAlphabet{\mathsfit}{bold}{\encodingdefault}{\sfdefault}{bx}{n}

\usepackage{amsmath,amssymb,amsthm}
\usepackage{graphicx,booktabs,array,tabularx}
\usepackage{longtable}
\usepackage{float}
\usepackage{placeins,needspace}
\usepackage{xcolor,colortbl}

\title{\raggedright SetCommit: Coordinated Token Commitments for Few-Step Diffusion Language Model Decoding}

\author{Yuannuo Feng$^{1}$ \quad Yubing Ye$^{2}$ \quad Wenshuai Yao$^{3}$ \quad Yizhe Chen$^{1}$ \\
Ngai Wong$^{2}$ \quad Wenyong Zhou$^{2}$ \quad Wang Kang$^{1}$ \\
$^{1}$Beihang University \quad $^{2}$The University of Hong Kong \quad $^{3}$Peking University \\
\texttt{\{ynfeng, yizhechen, wang.kang\}@buaa.edu.cn} \\
\texttt{\{taliayeyb, wenyongz\}@connect.hku.hk} \quad \texttt{nwong@eee.hku.hk} \\
\texttt{2200012654@stu.pku.edu.cn}
}

\iclrfinalcopy 

\usepackage[T1]{fontenc}
\usepackage{newunicodechar}
\newunicodechar{“}{\textquotedblleft}
\newunicodechar{”}{\textquotedblright}
\newcommand{\method}{SetCommit}
\definecolor{SetCommitBlue}{HTML}{2563EB}
\definecolor{SetCommitTeal}{HTML}{0F766E}
\definecolor{SetCommitTint}{HTML}{EEF4FD}
\definecolor{TableBand}{HTML}{E7ECF3}
\definecolor{TableHead}{HTML}{F3F5F8}
\newcommand{\gaincell}[1]{\textcolor{SetCommitTeal}{\textbf{#1}}}
\newcommand{\extracell}[1]{\cellcolor{SetCommitTeal!10}\gaincell{#1}}

\newtheorem{proposition}{Proposition}
\renewcommand{\eqref}[1]{\textup{(\ref{#1})}}
\usepackage{url}
\usepackage{hyperref}

\begin{document}

\maketitle
\lhead{Preprint.}

\begin{abstract}
Diffusion language models generate multiple token positions in parallel, but few-step decoding makes parallel commitment difficult: tokens that appear reliable individually may be jointly inconsistent, while premature commitments can prevent later context-dependent refinement.
In this paper, we introduce \textsc{SetCommit}, a training-free method for coordinated parallel token commitment. Instead of selecting tokens independently by confidence, \textsc{SetCommit} selects a commitment set that balances token reliability against local joint commitment risk, preserving conditioning opportunities for subsequent denoising steps. It further performs budget-neutral local exchanges, replacing vulnerable earlier commitments when newly revealed context supports more reliable alternatives. \textsc{SetCommit} requires no retraining or architectural modification and is compatible with existing diffusion decoding strategies.
Experiments with LLaDA-8B and Dream-7B on GSM8K, MATH-500, MBPP, and HumanEval show accuracy/pass@1 gains of up to 7.4 percentage points over native decoding at 32 model evaluations. At common quality targets, geometric-mean incremental speedups range from $1.44$ to $1.74\times$. 
Code is available at \url{https://github.com/Kissmetothemoon/SetCommit}.
\end{abstract}

\section{Introduction}
\label{sec:introduction}

Diffusion language models (DLMs) generate text by iteratively denoising multiple token positions in parallel, offering a promising alternative to autoregressive generation~\citep{DBLP:conf/nips/AustinJHTB21, DBLP:conf/nips/SahooASGMCRK24, DBLP:conf/nips/NieZYZOHZLWL25, DBLP:journals/corr/abs-2508-15487}. This parallelism can substantially reduce generation latency~\citep{DBLP:conf/iclr/DeschenauxG25}, but it also introduces a decoding decision absent from left-to-right generation: at each denoising step, which token predictions should be committed, and which positions should remain revisable until additional context becomes available?

This decision is particularly consequential in few-step decoding~\citep{DBLP:conf/iclr/LiuBNB25, DBLP:conf/iclr/XuGKNXLEV25}. Existing decoders commonly commit multiple positions according to independent confidence estimates or related token-level scores~\citep{DBLP:conf/nips/NieZYZOHZLWL25,wu2026fastdllm, DBLP:conf/nips/Ben-HamuGSNK25}. However, individually reliable predictions need not be reliable when committed together: simultaneous commitments may be mutually inconsistent, and prematurely fixing nearby uncertain positions can prevent them from benefiting from context revealed in later denoising steps~\citep{DBLP:conf/iclr/LiuBNB25, DBLP:conf/nips/IsraelBG25}. With only a limited number of updates, such unfavorable commitments leave few opportunities for recovery. Effective few-step decoding therefore requires coordinating \emph{sets} of parallel commitments, rather than merely selecting the most confident tokens independently.

Existing acceleration methods address complementary bottlenecks.
System-side methods reduce forward-pass cost through sparse or cached
computation~\citep{DBLP:conf/nips/MaYFW25,liu2026dllmcache,DBLP:conf/aaai/SongLLLHGHQ26,wu2026fastdllm},
while schedule-side methods reduce or adaptively allocate denoising
updates~\citep{DBLP:conf/iclr/DeschenauxG25,DBLP:conf/nips/Ben-HamuGSNK25,DBLP:conf/iclr/WeiZLZWQLZ26,wu2026fastdllm}.
Figure~\ref{fig:overview} locates commitment coordination at their shared
interface: model predictions must be converted into committed positions
before the decoder state is updated. Improving \emph{which} positions are
committed can therefore complement these accelerators while retaining
the host's computation mode and scheduling rule.

\begin{figure}[!t]
\centering
\includegraphics[width=\linewidth]{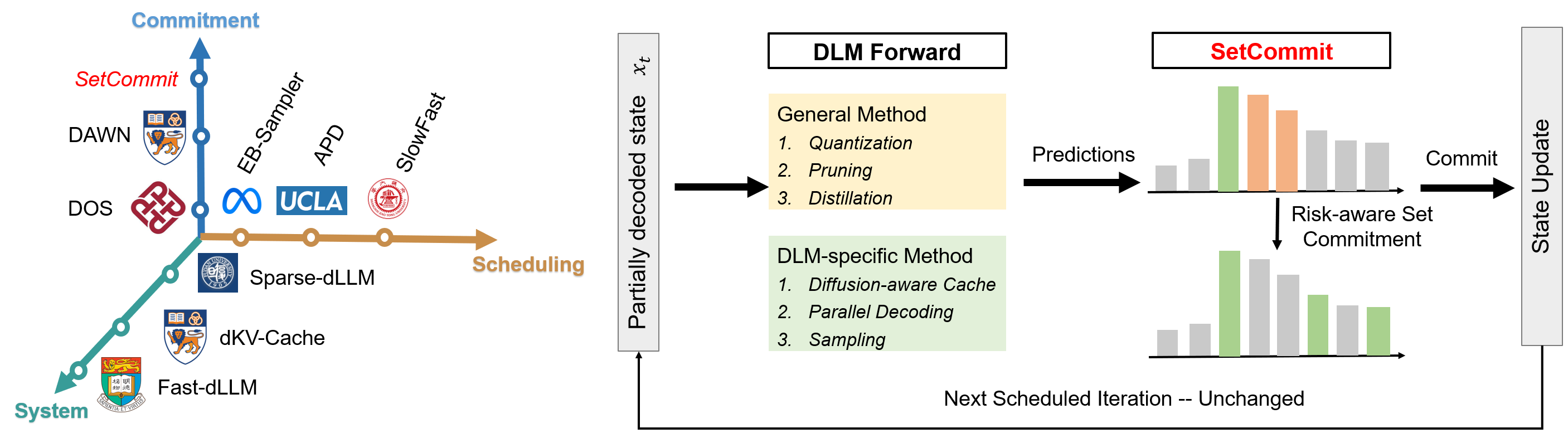}
\caption{\textbf{\textsc{SetCommit} complements system- and schedule-side diffusion decoding accelerators.}
Left: it coordinates parallel token commitments, orthogonally to reducing forward-pass computation or denoising updates.
Right: predictions enter \textsc{SetCommit}; the lower row's selected positions (green) feed State Update under the host's net quota. The return arrow follows the host scheduling rule.}
\label{fig:overview}
\end{figure}

We introduce \textsc{SetCommit}, a training-free commitment coordinator
inserted between the DLM forward pass and state update. It balances
token reliability with local joint risk to preserve future conditioning
opportunities, then uses budget-neutral exchanges to reopen vulnerable
earlier commitments. The module preserves the host's computation mode
and net commitment quota and composes with
native~\citep{DBLP:conf/nips/NieZYZOHZLWL25,DBLP:journals/corr/abs-2508-15487},
cached, scheduled~\citep{DBLP:conf/nips/Ben-HamuGSNK25}, and combined
decoders~\citep{wu2026fastdllm}.
Our contributions are threefold:
\begin{itemize}
\item We identify \emph{commitment coordination} as an orthogonal
acceleration axis for few-step DLM decoding, improving parallel
commitment decisions at the interface between model prediction and
decoder state update.

\item We propose \textsc{SetCommit}, a training-free and plug-and-play commitment coordinator. \textsc{SetCommit} selects commitment sets by jointly considering token-level reliability and local joint commitment risk, and introduces budget-neutral local exchange to revise vulnerable earlier commitments as generation context evolves.

\item Experiments with LLaDA-8B and Dream-7B on GSM8K, MATH-500, MBPP, and HumanEval show $4.9$--$7.4$ percentage-point quality gains over native decoding at 32 evaluations. Across native, fixed/adaptive Fast-dLLM, and EB-Sampler hosts, \textsc{SetCommit} achieves $1.44$--$1.74\times$ geometric-mean incremental speedups at common quality targets. 
\end{itemize}

\section{Related Work}
\label{sec:related_work}


\noindent\textbf{Efficient DLM Decoding.}
Existing work improves DLM inference efficiency along two largely complementary directions. Compute-side methods reduce the cost of repeated bidirectional forward passes through delayed or approximate KV caching, cache reuse, dynamic eviction, or sparse attention~\citep{DBLP:conf/nips/MaYFW25,wu2026fastdllm,liu2026dllmcache, DBLP:conf/aaai/SongLLLHGHQ26}. Schedule-side methods instead reduce the number of denoising updates, or adaptively allocate update budgets according to confidence, entropy, or other uncertainty signals~\citep{DBLP:conf/iclr/DeschenauxG25, DBLP:conf/iclr/WeiZLZWQLZ26,DBLP:conf/nips/Ben-HamuGSNK25, li2026diffusion}. Confidence-aware parallel decoding further uses token-level scores to determine how many or which predictions to reveal at an update~\citep{DBLP:conf/nips/NieZYZOHZLWL25, wu2026fastdllm}. These methods primarily optimize the cost of each forward pass or the allocation of denoising steps; when selecting multiple tokens, they typically rely on token-wise criteria. In contrast, \textsc{SetCommit} addresses the orthogonal question of which positions should be committed \emph{together}. Operating between the DLM forward pass and decoder state update, it can therefore be combined with native, cached, scheduled, and jointly accelerated host decoders.

\noindent\textbf{Commitment Revision and Self-correcting Decoding.}
Another related line of work mitigates errors caused by premature predictions in masked diffusion decoding. Such approaches may remask previously revealed tokens at inference time, train models to recover from their own errors, or modify transition and sampling procedures to permit revision of visible tokens~\citep{DBLP:conf/nips/WangSSK25, DBLP:conf/icml/RutteFDOS025, DBLP:conf/nips/ZhaoSCDML25}. These methods focus on recovering after an unfavorable token-level commitment has been made, often by explicitly revisiting token values or altering the decoding process. \textsc{SetCommit} couples preventive set selection with explicit remasking of vulnerable earlier commitments. It balances reliability and local co-commitment risk, then offsets each accepted withdrawal with an additional fresh commitment to preserve the host-selected net quota. This joint selection-and-revision interface composes with host computation and scheduling mechanisms; it does not require model retraining.

\section{SetCommit: Coordinated Commitment for Parallel Decoding}
\label{sec:method}

\subsection{Problem Setup and Coordinated Commitment}
\label{sec:setup}

Let $x^{(t)}$ denote the partially decoded sequence at iteration $t$.
Within the host-editable block of length $L$, let $M_t$ and $C_t$ denote
masked and visible positions, respectively; positions retain their
sequence indices. The host supplies current distributions $p_i^{(t)}$
and a net-commitment quota $q_t$. The module uses top-1 proposals
$\hat y_i^{(t)}=\arg\max_v p_i^{(t)}(v)$ and posteriors
$c_i^{(t)}=p_i^{(t)}(\hat y_i^{(t)})$. The quota specifies the net
decrease in masked positions, allowing some earlier commitments to be
remasked while additional positions are revealed. Below we describe the
Native/Fast policy; the separate EB configuration is specified in
Appendix~\ref{app:eb}.

Standard parallel decoding ranks positions independently by their
confidence $c_i^{(t)}$ and commits the top $q_t$. Its baseline set is

\begin{equation}
B_t = \operatorname{TopK}_{i \in M_t}\left(c_i^{(t)}, q_t\right).
\label{eq:baseline-selection}
\end{equation}

This rule is locally myopic: it evaluates each position in isolation and does not consider whether several predictions should be committed simultaneously. Consequently, independently selected positions may concentrate in a short span even when their predictions remain strongly dependent on nearby unresolved context.

\begin{figure}[t]
    \centering
    \includegraphics[width=\linewidth]{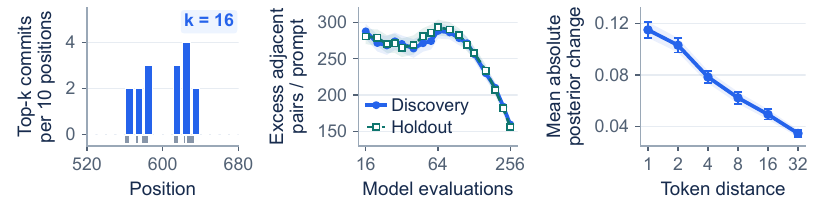}
    \caption{\textbf{Local clustering and context sensitivity on GSM8K.}
Left: counts of 16 top-$k$ commitments per 10-position bin in one state.
Center: cumulative excess adjacent pairs per prompt in the Discovery/Holdout
subsets (240 prompts each). Right: mean absolute predicted-token posterior
change after revealing a nearby prediction. Bands/bars are mean $\pm1.96$ SE;
aggregation units and metrics are detailed in Appendix~\ref{app:preliminary}.}
    \label{fig:low_step}
\end{figure}

Figure~\ref{fig:low_step} shows clustered top-$k$ positions and similar
excess-adjacency profiles in both subsets. Cumulative excess adjacency
varies nonmonotonically over 16--64 evaluations and declines toward 256.
The posterior response falls from 0.115 at distance 1 to 0.035 at
distance 32, indicating stronger sensitivity to nearby revealed context.
Together, these observations motivate coordinating simultaneous commitments
so that context-sensitive positions can use information revealed later.

\method{} balances token reliability with the local structure of the
commitment set. Its base selection preserves the host-selected quota;
a separate remask exchange reopens vulnerable earlier commitments while
maintaining the same net progress.

\subsection{Reliability-Aware Set Selection}
\label{sec:selection}

To test the effect of commitment structure on generation quality, we compare alternative commitment sets constructed from the same decoding state under the same commit count. Starting from an identical partially decoded sequence, each branch reveals a different position set chosen by one selection rule, and decoding then proceeds to completion under a shared continuation policy. All branches share the source state's token proposals, the continuation policy, and the remaining evaluation budget, isolating the choice of positions revealed at the intervention as the only degree of freedom. Because proposals, visible context, and remaining compute are held fixed across branches, systematic differences in final task accuracy can be attributed to the composition of the commitment set rather than to sampling stochasticity, budget allocation, or continuation randomness.

\begin{figure}[t]
    \centering
    \includegraphics[width=\linewidth]{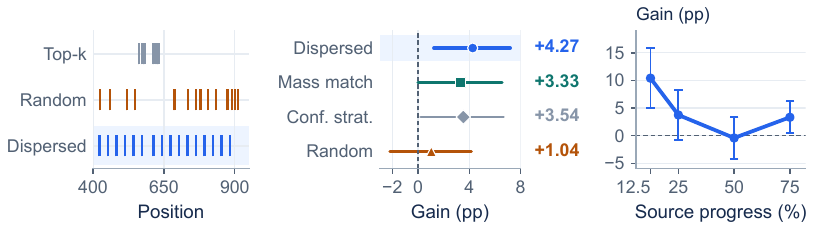}
    \caption{\textbf{Commitment structure and continuation accuracy.}
Left: 16 selected positions per rule in a shared state.
Center: paired GSM8K accuracy differences from clustered top-$k$ (pp),
averaged over four source-progress anchors at 32 evaluations.
Right: dispersed-minus-clustered gain versus source progress.
Intervals are 95\% paired bootstrap CIs over 240 prompts
(Appendix~\ref{app:preliminary}).}
    \label{fig:commitment_evidence}
\end{figure}

Figure~\ref{fig:commitment_evidence} holds the source state and commit count
fixed. Dispersed, posterior-mass-matched, and confidence-stratified rules gain
4.27, 3.33, and 3.54 pp over clustered top-$k$; random selection gains 1.04 pp
with a CI spanning zero. The dispersed effect is largest at 12.5\% progress
(10.42 pp), falls to $-0.42$ pp at 50\%, and recovers to 3.33 pp at 75\%.
The progress-dependent effects motivate balancing token reliability
with local joint risk; the figure reports the uncertainty at each anchor.

For masked position $i$, the selector measures reliability by the logit margin
\begin{equation}
m_i^{(t)}=z_{i,(1)}^{(t)}-z_{i,(2)}^{(t)},
\label{eq:margin}
\end{equation}
where $(1),(2)$ denote the two largest vocabulary logits. Min--max
normalization over $M_t$ gives reliability $\bar m_i^{(t)}\in[0,1]$;
posterior $c_i^{(t)}$ defines the baseline $B_t$ in
Equation~\ref{eq:baseline-selection}. Vocabulary-wide uncertainty is
measured by normalized entropy:
\begin{equation}
u_i^{(t)}=-\frac{1}{\log|\mathcal V|}
\sum_{v\in\mathcal V}p_i^{(t)}(v)\log p_i^{(t)}(v).
\label{eq:entropy}
\end{equation}
With mask density $d_t=|M_t|/L$, the local co-commitment risk is
\begin{equation}
a_{ij}^{(t)}=d_t\exp(-|i-j|/\tau_t)u_i^{(t)}u_j^{(t)}
\quad(i\ne j),\qquad a_{ii}^{(t)}=0.
\label{eq:pairwise-risk}
\end{equation}
This entropy-weighted locality proxy assigns larger risk to jointly
uncertain nearby predictions. The distance scale $\tau_t$ determines
how far the coordination penalty extends within the editable block.

The candidate pool contains up to $\lceil\kappa q_t\rceil$ margin-ranked
masked positions, augmented with $B_t$, where $\kappa\ge1$ controls
shortlist size. Starting from an empty set, the selector greedily adds
candidates with score
\begin{equation}
g_t(i\mid S)=\bar m_i^{(t)}-
\min\!\left\{\lambda_t\sum_{j\in S}a_{ij}^{(t)},\,\beta\right\}.
\label{eq:greedy-selection}
\end{equation}
The penalty cap $\beta\ge0$ limits how strongly risk can outweigh reliability.
For a retention ratio $\rho\in[0,1]$, the selector reserves enough remaining
slots for baseline positions to satisfy
\begin{equation}
|S_t|=q_t,\qquad |S_t\cap B_t|\ge\lceil\rho q_t\rceil.
\label{eq:baseline-retention}
\end{equation}
Baseline retention and the capped risk penalty limit departures from
reliable host decisions. Risk strength and bandwidth decrease with
progress; normalization, schedules, and default settings appear in
Appendices~\ref{app:implementation} and~\ref{app:configuration}.
\begin{figure}[t]
\centering
\includegraphics[width=\linewidth]{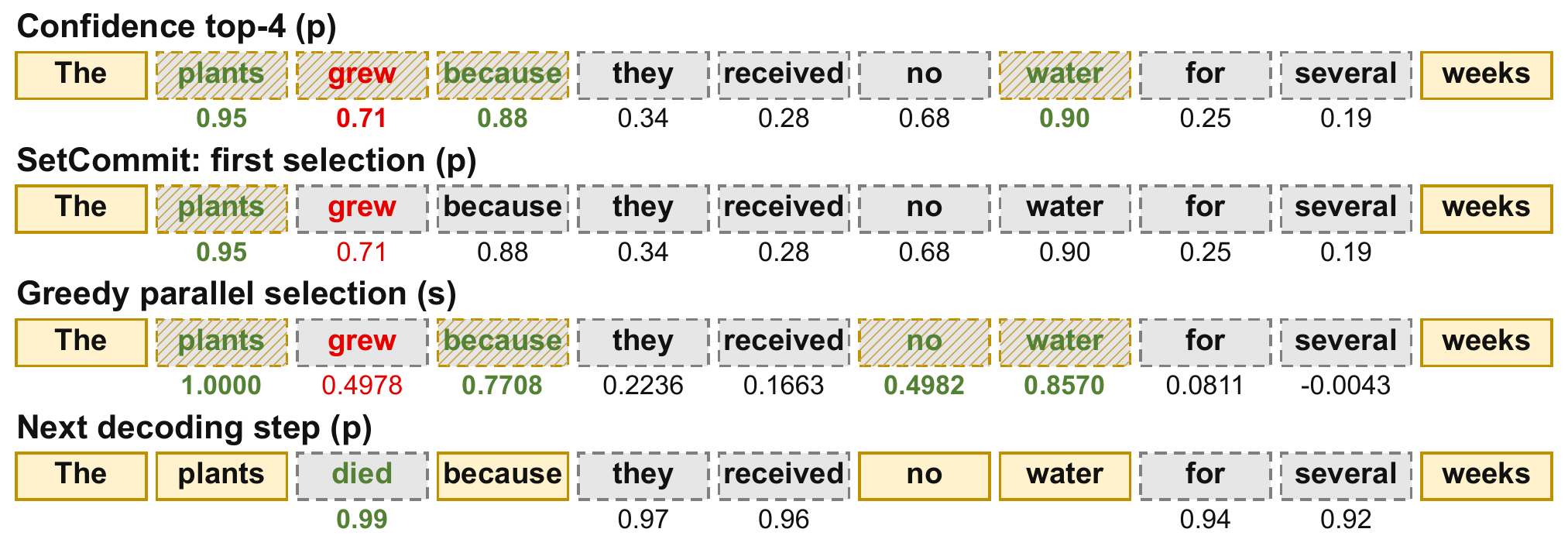}
\caption{\textbf{Coordinated selection postpones a context-sensitive commitment.}
Conceptual word-level example with $q_t=4$ and three baseline positions
retained. $p$ denotes posterior confidence and $s$ the greedy score $g_t$.
The first three rows share one forward pass; the last shows the next pass.}
\label{fig:method}
\end{figure}
In Figure~\ref{fig:method}, both sets select four positions and share
\emph{plants}, \emph{because}, and \emph{water}, satisfying the retention
constraint. Selecting \emph{no} leaves \emph{grew} deferred; the next
forward then predicts \emph{died} at that still-masked position.
The exchange operation below extends revision to previously visible tokens.

\subsection{Bounded Remasking and Host Integration}
\label{sec:integration}

New context can change the model's support for an earlier token.
Exchange withdraws vulnerable visible commitments and adds equally many
fresh commitments. For visible position $i\in C_t$, let
$o_i^{(t)}=p_i^{(t)}(x_i^{(t)})$ be the current-token posterior.
An old position is eligible only if its token is outside the current
top-$K$ predictions and $c_i^{(t)}>o_i^{(t)}+\epsilon_1$, where $K$ is
the protection rank and $\epsilon_1>0$ is a posterior-gain margin.
Exchange is active only when normalized host progress $s_t$ lies in
$[s_{\min},s_{\max}]\subseteq[0,1]$, with count capped by
\begin{equation}
k_t=\min\{|M_t|-q_t,\lfloor \alpha q_t\rfloor\}.
\label{eq:remask-cap}
\end{equation}
Here $\alpha\in[0,1]$ limits withdrawals relative to the net quota.
Eligible old positions with low current-token posteriors form $R_t$;
extra commitments $E_t\subseteq M_t\setminus S_t$ use the same
selection rule, with $|E_t|=|R_t|\le k_t$.
Each extra-token posterior must exceed its paired withdrawn-token
posterior by more than $\epsilon_1$.
With set risk $H_t(U)=\sum_{i<j,\,i,j\in U}a_{ij}^{(t)}$, acceptance
also requires
\begin{equation}
H_t\bigl((C_t\setminus R_t)\cup S_t\cup E_t\bigr)
\le H_t(C_t\cup S_t)+\epsilon_2.
\label{eq:remask-risk}
\end{equation}
Here $\epsilon_2\ge0$ is a numerical tolerance; defaults are specified
in Appendix~\ref{app:implementation}.
The implementation tries feasible exchange sizes from the cap down to
one and accepts the first passing size. Outside the progress window,
or if no candidate passes, $R_t=E_t=\varnothing$.

The host remasks $R_t$ and writes current top-1 proposals at
$A_t=S_t\cup E_t$. Withdrawn positions remain masked until a subsequent
evaluation. Thus,
\begin{equation}
\begin{aligned}
M_{t+1}&=(M_t\setminus A_t)\cup R_t,\\
|M_t|-|M_{t+1}|&=|S_t|+|E_t|-|R_t|=q_t.
\end{aligned}
\label{eq:remask-quota}
\end{equation}
This reopens earlier decisions while preserving the host-selected net
quota exactly; Appendix~\ref{app:proof} gives the accounting argument.

Selection and exchange reuse current forward-pass signals. Fixed-budget
hosts retain their quota schedule; adaptive hosts apply their acceptance
rule to the updated state, allowing subsequent quotas and realized NFE to
change. Cached hosts refresh affected positions through their backend,
with this work included in latency. The module leaves model weights and
the host stopping rule unchanged. Model-specific proposal and evaluation
protocols are specified in Appendix~\ref{app:protocol}.

\begin{table}[!t]
\centering
\caption{ \textbf{Multi-benchmark quality--cost comparison.}
$Q$: accuracy/pass@1 (\%); NFE: mean evaluations. Total uses the matched-quality
native host; mean extra averages paired-host speedups geometrically across
tasks (3-point margin). Shading marks \method{}; blue NFE marks paired
reductions. Bold Total values are column-best within each model.}
\label{tab:main}
\small
\setlength{\tabcolsep}{3.2pt}
\begin{tabular}{l*{4}{rrr}r}
\toprule
\rowcolor{TableHead}
Method & \multicolumn{3}{c}{GSM8K} & \multicolumn{3}{c}{MATH-500} &
\multicolumn{3}{c}{MBPP} & \multicolumn{3}{c}{HumanEval} & \shortstack{Mean\\extra}\\
\cmidrule(lr){2-4}\cmidrule(lr){5-7}\cmidrule(lr){8-10}\cmidrule(lr){11-13}
& $Q\uparrow$ & NFE$\downarrow$ & Total & $Q\uparrow$ & NFE$\downarrow$ & Total &
$Q\uparrow$ & NFE$\downarrow$ & Total & $Q\uparrow$ & NFE$\downarrow$ & Total &\\
\midrule
\rowcolor{TableBand}
\multicolumn{14}{l}{\textbf{LLaDA-8B}\quad\textit{Quality targets: 73.0/32.4/44.5/37.2\%}}\\
Native & 73.1 & 64 & 1.00 & 34.2 & 96 & 1.00 & 46.6 & 96 & 1.00 & 39.0 & 96 & 1.00 & --\\
\rowcolor{SetCommitTint}\hspace{0.5em}\textcolor{SetCommitBlue}{\textbf{+ \method{}}} & 74.2 & \textcolor{SetCommitBlue}{48} & 1.27 & 33.8 & \textcolor{SetCommitBlue}{48} & 1.90 & 46.1 & \textcolor{SetCommitBlue}{48} & 1.94 & 39.0 & \textcolor{SetCommitBlue}{48} & 1.89 & \extracell{1.72}\\
Fast fixed & 73.1 & 64 & 2.01 & 34.0 & 96 & 1.96 & 44.7 & 64 & 2.93 & 39.6 & 96 & 1.91 & --\\
\rowcolor{SetCommitTint}\hspace{0.5em}\textcolor{SetCommitBlue}{\textbf{+ \method{}}} & 74.6 & \textcolor{SetCommitBlue}{48} & 2.38 & 34.6 & \textcolor{SetCommitBlue}{48} & 3.72 & 46.8 & \textcolor{SetCommitBlue}{48} & 3.47 & 38.4 & \textcolor{SetCommitBlue}{48} & 3.36 & \extracell{1.47}\\
Fast adaptive & 73.6 & 62.2 & 2.05 & 32.4 & 64.8 & 2.99 & 44.7 & 58.3 & 3.22 & 39.0 & 93.4 & 1.99 & --\\
\rowcolor{SetCommitTint}\hspace{0.5em}\textcolor{SetCommitBlue}{\textbf{+ \method{}}} & 73.1 & \textcolor{SetCommitBlue}{32.3} & \textbf{3.49} & 33.4 & \textcolor{SetCommitBlue}{48} & \textbf{3.87} & 45.4 & \textcolor{SetCommitBlue}{44.9} & \textbf{3.76} & 38.4 & \textcolor{SetCommitBlue}{44.8} & \textbf{3.64} & \extracell{1.47}\\
EB-Sampler & 73.6 & 58.5 & 1.09 & 32.4 & 62.9 & 1.53 & 45.2 & 63.5 & 1.50 & 39.6 & 92.6 & 1.01 & --\\
\rowcolor{SetCommitTint}\hspace{0.5em}\textcolor{SetCommitBlue}{\textbf{+ \method{}}} & 75.7 & \textcolor{SetCommitBlue}{47.7} & 1.33 & 33.6 & \textcolor{SetCommitBlue}{44.6} & 2.12 & 45.9 & \textcolor{SetCommitBlue}{45.8} & 1.97 & 39.0 & \textcolor{SetCommitBlue}{46.4} & 1.95 & \extracell{1.44}\\
DAWN & 74.0 & 60.1 & 1.04 & 33.2 & 64.7 & 1.39 & 45.2 & 60.5 & 1.53 & 38.4 & 60.9 & 1.44 & --\\
DOS & 73.5 & 59.9 & 1.05 & 32.4 & 63 & 1.45 & 45.4 & 62.7 & 1.50 & 39.6 & 97.4 & 0.96 & --\\
\midrule
\rowcolor{TableBand}
\multicolumn{14}{l}{\textbf{Dream-7B}\quad\textit{Quality targets: 77.3/40.6/56.7/51.9\%}}\\
Native & 78.9 & 96 & 1.00 & 42.6 & 96 & 1.00 & 57.4 & 64 & 1.00 & 54.3 & 96 & 1.00 & --\\
\rowcolor{SetCommitTint}\hspace{0.5em}\textcolor{SetCommitBlue}{\textbf{+ \method{}}} & 78.6 & \textcolor{SetCommitBlue}{48} & 1.96 & 41.2 & \textcolor{SetCommitBlue}{48} & 1.92 & 58.3 & \textcolor{SetCommitBlue}{48} & 1.25 & 53.7 & \textcolor{SetCommitBlue}{48} & 1.93 & \extracell{1.74}\\
Fast fixed & 77.4 & 64 & 2.89 & 42.8 & 96 & 2.01 & 57.6 & 64 & 1.92 & 52.4 & 64 & 2.83 & --\\
\rowcolor{SetCommitTint}\hspace{0.5em}\textcolor{SetCommitBlue}{\textbf{+ \method{}}} & 78.0 & \textcolor{SetCommitBlue}{48} & 3.56 & 42.8 & \textcolor{SetCommitBlue}{48} & 3.77 & 57.6 & \textcolor{SetCommitBlue}{32} & \textbf{3.39} & 53.7 & \textcolor{SetCommitBlue}{48} & 3.35 & \extracell{1.48}\\
Fast adaptive & 77.9 & 65.3 & 2.97 & 42.4 & 92.2 & 2.09 & 57.6 & 59.3 & 2.15 & 54.3 & 93.5 & 2.11 & --\\
\rowcolor{SetCommitTint}\hspace{0.5em}\textcolor{SetCommitBlue}{\textbf{+ \method{}}} & 77.6 & \textcolor{SetCommitBlue}{31.2} & \textbf{5.47} & 41.8 & \textcolor{SetCommitBlue}{46.7} & \textbf{3.96} & 59.0 & \textcolor{SetCommitBlue}{43.9} & 2.59 & 53.7 & \textcolor{SetCommitBlue}{45.6} & \textbf{3.58} & \extracell{1.63}\\
EB-Sampler & 78.2 & 61.9 & 1.52 & 41.0 & 64.1 & 1.52 & 58.1 & 60.5 & 1.07 & 53.0 & 63.2 & 1.49 & --\\
\rowcolor{SetCommitTint}\hspace{0.5em}\textcolor{SetCommitBlue}{\textbf{+ \method{}}} & 77.7 & \textcolor{SetCommitBlue}{31.6} & 2.89 & 42.0 & \textcolor{SetCommitBlue}{44.8} & 2.10 & 57.4 & \textcolor{SetCommitBlue}{32.4} & 1.83 & 52.4 & \textcolor{SetCommitBlue}{58.6} & 1.56 & \extracell{1.47}\\
DAWN & 78.2 & 60.7 & 1.52 & 41.0 & 64.8 & 1.44 & 58.3 & 62.7 & 0.99 & 52.4 & 61.1 & 1.48 & --\\
DOS & 78.2 & 61.8 & 1.48 & 41.0 & 62.7 & 1.50 & 57.8 & 60.7 & 1.03 & 52.4 & 63.2 & 1.44 & --\\
\bottomrule
\end{tabular}
\end{table}
\section{Experiments}
\label{sec:experiments}

\subsection{Experimental Setup}
\label{sec:experimental_setup}

\paragraph{Models and benchmarks.}
We evaluate LLaDA-8B and Dream-7B
\citep{DBLP:conf/nips/NieZYZOHZLWL25,DBLP:journals/corr/abs-2508-15487} on GSM8K \citep{DBLP:journals/corr/abs-2110-14168},
MATH-500 \citep{DBLP:conf/nips/HendrycksBKABTS21, DBLP:conf/iclr/LightmanKBEBLLS24}, MBPP \citep{DBLP:journals/corr/abs-2108-07732}, and
HumanEval \citep{DBLP:journals/corr/abs-2107-03374}.
We report accuracy for mathematics and functional pass@1 for code;
LLaDA2.1/2.2 provide transfer targets.

\paragraph{Baselines and paired hosts.}
Baselines are native sampling, Fast-dLLM fixed/adaptive
\citep{wu2026fastdllm}, EB-Sampler
\citep{DBLP:conf/nips/Ben-HamuGSNK25}, DAWN \citep{DBLP:journals/corr/abs-2602-06953}, and DOS
\citep{DBLP:conf/acl/ZhouHH26}.
The first four hosts are also evaluated with \method{}, sharing models,
inputs, and scorers. For Dream, module proposals change native sampling
to deterministic top-1 predictions, so its comparisons evaluate complete
decoding pipelines (Appendix~\ref{app:protocol}). EB uses our
reimplementation and a separate module policy (Appendix~\ref{app:eb}).

\paragraph{Quality and cost.}
Each model--task pair uses a common target
$q_\star=Q_{\mathrm{ref}}-\delta$, where $Q_{\mathrm{ref}}$ is native-128
quality and $\delta=3$ percentage points. We select the lowest-latency
budget/threshold setting meeting the unrounded target; development and
confirmation protocols appear in Appendix~\ref{app:protocol}.
NFE counts actual forwards. At this target, incremental and total
speedups are $S_{\mathrm{inc}}=L_B/L_{B+\method{}}$ and
$S_{\mathrm{total}}=L_{\mathrm{native}}/L_{B+\method{}}$ for host $B$.
All metrics use the same configuration. Paired timing on matched L20
hardware includes selection, remasking, and cache refresh
(Appendix~\ref{app:analysis}).
Matched-quality comparisons measure the cost of reaching a shared
quality requirement. Fixed-NFE ablations complement this view by
measuring the quality delivered by each decoding pipeline at the same
number of model evaluations.

\subsection{Main Results}
\label{sec:native_results}
\label{sec:composed_results}

\paragraph{Fewer evaluations at a common quality target.}
In Table~\ref{tab:main}, Native+\method{} reaches all eight model--task
targets at 48 evaluations, compared with 64--96 for native decoding.
The resulting geometric-mean incremental speedups are $1.72\times$ for
LLaDA and $1.74\times$ for Dream. Thus the quality improvements translate
into a smaller evaluation budget across both mathematical reasoning and
code generation, even after charging the added coordination cost.
For LLaDA/HumanEval, both variants reach 39.0\% pass@1, while
\method{} reduces NFE from 96 to 48 and achieves $1.89\times$
speedup. This illustrates the benefit at unchanged measured quality.

\paragraph{Complementarity with accelerated hosts.}
Every paired entry in Table~\ref{tab:main} reduces NFE, including hosts
that already use caching or adaptive scheduling. Their geometric-mean
incremental speedups range from $1.44$ to $1.63\times$, supporting
commitment coordination as a useful addition to these optimizations.
The best composition remains task-dependent: on Dream/GSM8K, Fast adaptive
improves from $2.97$ to $5.47\times$ total speedup, while on Dream/MBPP,
Fast fixed+\method{} is faster than Fast adaptive+\method{}
($3.39$ versus $2.59\times$ total).

\paragraph{Where the latency savings arise.}
Figure~\ref{fig:frontiers} shows the largest separation in the low-latency
region, with paired curves approaching similar quality at larger budgets.
This pattern is consistent with coordination recovering quality when few
updates are available. At the common LLaDA/GSM8K target, the adaptive
Fast host reduces NFE from 62.2 to 32.3 and gains $1.70\times$ in latency;
the fixed Fast host reduces NFE from 64 to 48 and gains $1.19\times$.
Adaptive scheduling can turn improved intermediate predictions into
different later quotas, whereas a fixed host must select another point
on its budget grid.

\begin{figure}[!t]
\centering
\includegraphics[width=\linewidth]{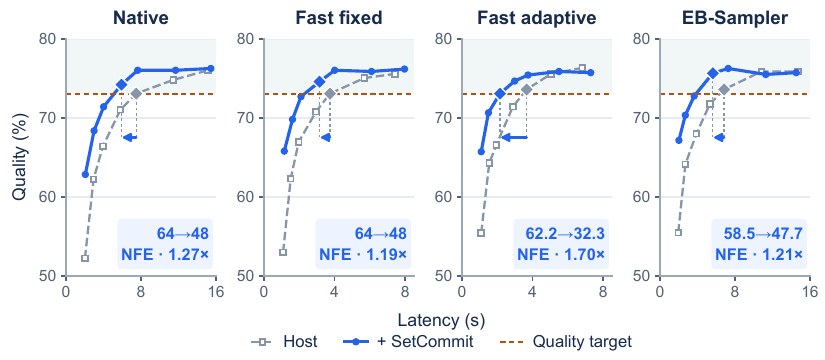}
\caption{ \textbf{Paired quality--latency curves.}
LLaDA-8B/GSM8K on four hosts. Diamonds mark selected points; dashed lines
mark the quality target. Arrows show latency savings; badges give paired
NFE and incremental speedup, using the same records as Table~\ref{tab:main}.}
\label{fig:frontiers}
\end{figure}

\subsection{Ablation Studies}
\label{sec:mechanism_results}

\paragraph{Selection, exchange, and controls.}
\label{sec:ablation_results}
Table~\ref{tab:fixed_budget} crosses selection $s$ and remasking $r$ at
32 NFE, with quality denoted by $Q_{sr}$. Selection alone outperforms
confidence-plus-spacing, stratified random, and shuffled-risk controls
in all four settings. These controls support combining reliability with
position-specific risk beyond geometric spacing or random perturbation.
The shuffled-risk control specifically tests the assignment of risk to
positions. Holding NFE fixed measures how effectively each variant uses
the same number of model evaluations.
Full \method{} gives the highest
quality in each setting, adding 4.9--6.7 pp over the host at the same NFE
for 2.4\% mean latency overhead.

\begin{table}[t]
\centering
\caption{ \textbf{Component and control ablations.}
Native hosts at 32 NFE. Quality is accuracy/pass@1 (\%); bold marks
column-best quality. $\Delta$ gives gains over the host; overhead averages
the latency change across the four settings.}
\label{tab:fixed_budget}
\small
\begin{tabular}{lrrrrr}
\toprule
\rowcolor{TableHead}
& \multicolumn{2}{c}{LLaDA-8B} & \multicolumn{2}{c}{Dream-7B} & Overhead\\
\cmidrule(lr){2-3}\cmidrule(lr){4-5}
Variant & GSM8K & HumanEval & GSM8K & HumanEval & (\%)\\
\midrule
Host baseline & 66.4 & 30.5 & 70.2 & 45.7 & +0.0\\
Selection only & 70.2 & 35.4 & 74.5 & 49.4 & +1.7\\
Remask only & 67.8 & 32.3 & 71.4 & 47.6 & +1.3\\
\rowcolor{SetCommitTint}\textcolor{SetCommitBlue}{\textbf{Full \method{}}} & \textbf{71.4} & \textbf{37.2} & \textbf{76.0} & \textbf{50.6} & +2.4\\
\quad $\Delta$ vs. host (pp) & \gaincell{+5.0} & \gaincell{+6.7} & \gaincell{+5.8} & \gaincell{+4.9} & --\\
\midrule
Confidence + spacing & 67.3 & 32.9 & 72.4 & 47.0 & +1.2\\
Stratified random & 67.4 & 31.7 & 70.2 & 46.3 & +1.2\\
Shuffled risk & 67.1 & 31.1 & 70.4 & 45.7 & +1.7\\
\bottomrule
\end{tabular}
\end{table}

\paragraph{Selection and remask contributions.}
Figure~\ref{fig:mechanism} extends the component analysis to all four
tasks on both primary models. Selection contributes 3.5--5.4 pp over
the host, and remasking adds another 0.6--2.0 pp beyond selection-only.
Selection therefore accounts for most of the improvement, while revision
remains useful after preventive coordination. The interaction
$Q_{11}-Q_{10}-Q_{01}+Q_{00}$ ranges from $-1.0$ to $+0.3$ pp,
indicating partly overlapping contributions from preventive selection
and subsequent revision.
The same-state continuation analysis isolates the choice of a base
commitment set: source state, proposals, commit count, continuation
policy, and remaining budget are shared. Relative to the confidence-matched
control, the coordinated selector adds 1.86/0.88/0.07/0.64 pp at
12.5/25/50/75\% progress. The largest observed advantage occurs early,
when much context remains unresolved. This pattern is consistent with
preserving conditioning opportunities during the most uncertain part
of decoding.

\begin{figure}[!t]
\centering
\includegraphics[width=\linewidth]{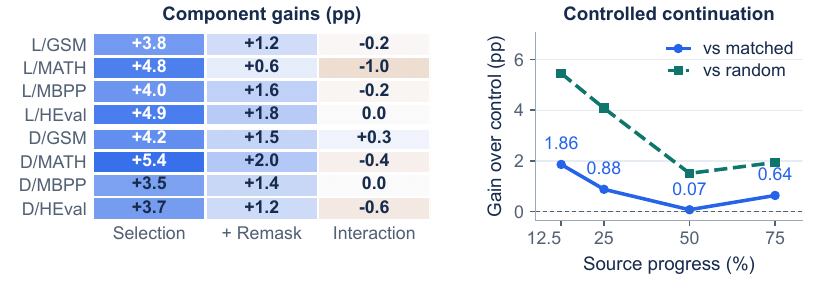}
\caption{\textbf{Component contributions and controlled continuation.}
Left: native hosts at 32 NFE; columns show $Q_{10}-Q_{00}$,
$Q_{11}-Q_{10}$, and $Q_{11}-Q_{10}-Q_{01}+Q_{00}$ (pp).
L/D denote LLaDA-8B/Dream-7B; blue/amber indicate positive/negative values.
Right: LLaDA-8B/GSM8K same-state selector gains over confidence-matched
and stratified-random controls. Labels show differences from the matched
control. All entries are point estimates.}
\label{fig:mechanism}
\end{figure}

\paragraph{Frozen configurations and deployment cost.}
Figure~\ref{fig:ablation} shows 4.9--7.4 pp gains with frozen settings
at 32 NFE, while retuning adds 0--1.06 pp (mean 0.39 pp). Most of the
reported improvement therefore survives without target-specific retuning,
supporting configuration reuse across the evaluated models and tasks.
On MATH-500, retuning adds zero in all four model columns, while the
frozen gains remain 5.4--7.4 pp. This provides a concrete example of
strong improvements from reusing the selected configuration.
Transfer uses model-specific decoding routes; selection boundaries
are detailed in Appendix~\ref{app:protocol}.
At the same NFE, added latency is 1.39--2.90\% on native hosts and
4.75--9.28\% on cached hosts. Caching reduces the host time against which
coordination is charged and introduces refresh work, making the relative
overhead larger. At the common quality targets in Table~\ref{tab:main}, fewer
evaluations offset this coordination cost and reduce end-to-end latency.

\begin{figure}[!htb]
\centering
\includegraphics[width=\linewidth]{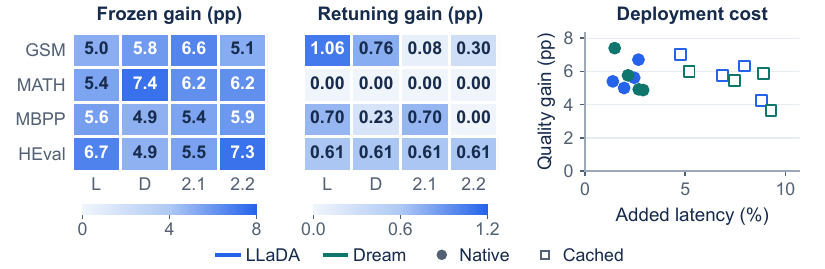}
\caption{\textbf{Frozen configurations and deployment cost at 32 NFE.}
Left: frozen quality gains over the corresponding host (pp).
Center: retuned minus frozen quality (pp); row order matches the left
matrix. L/D denote LLaDA-8B/Dream-7B; 2.1/2.2 denote additional LLaDA
targets. Color scales are separate and zero-anchored.
Right: each point pairs quality gain and added latency for one primary
model, task, and native/cached host; there are 16 points.
Colors identify models and markers identify hosts in this panel.}
\label{fig:ablation}
\end{figure}

\paragraph{Sensitivity and additional checks.}
\label{sec:runtime_results}
Additional analyses in Appendix~\ref{app:analysis} show that matched-budget
gains are largest at low NFE and diminish as the budget grows. Risk strength
and remask-cap effects characterize a nonmonotonic tradeoff between
coordination and token reliability. Incremental
speedups also persist under stricter quality margins: at a one-point
margin, geometric-mean speedups are $1.68\times$ for Native+\method{}
and $1.54\times$ for Fast fixed+\method{} across both primary models
and all four tasks. The same appendix provides parameter sweeps,
memory measurements, runtime breakdowns,
and the reporting protocol for these point estimates.

\section{Discussion}
\label{sec:discussion}

The experiments connect commitment structure to practical decoding
efficiency. The same-state comparisons show that changing which positions
are revealed can improve continuation quality, with the largest observed
advantage early in decoding. The component analysis further shows that
bounded remasking adds gains after coordinated selection. Together, these
results support preserving conditioning opportunities while retaining a
limited path to revise earlier commitments.
Baseline retention and the capped risk penalty keep this intervention
focused: most high-confidence host choices remain in the set, while the
entropy-weighted distance term delays uncertain neighboring predictions.
This design balances immediate progress with the context available in
later updates.

The host interface makes this decision complementary to caching and
scheduling. \method{} uses signals from the current forward pass and
preserves the net quota, while the improved quality allows fewer
evaluations at a common target. The resulting latency gains include
selection, exchange, and cache maintenance. This links the benefit of
coordinated commitment to end-to-end efficiency across the evaluated
hosts, with the strongest quality gains in the few-step regime.

\section{Conclusion}
\label{sec:conclusion}

We introduced \method{}, a training-free coordinator for parallel token commitment in diffusion language models. It combines reliability-aware set selection, which balances token reliability against local co-commitment risk, with bounded remasking, which revises vulnerable earlier commitments as context evolves, while preserving the host's per-step net commitment quota. On mathematical reasoning and code generation with LLaDA-8B and Dream-7B, it improves quality by $4.9$--$7.4$ percentage points at $32$ evaluations and achieves geometric-mean incremental speedups of $1.44$--$1.74\times$ at common quality targets across the evaluated model--host pairs. These results support coordinated commitment as a complementary component of efficient few-step decoding.

\clearpage

\subsection*{AI use statement}

In preparing this work, we used generative AI tools for language editing
and manuscript revision, and for feedback on methodological descriptions,
experimental reporting, and interpretation of results. This assistance
included consistency checks across the manuscript and the available
reproducibility documentation. AI-generated suggestions were assessed
against the manuscript, cited sources, and available artifacts. The authors
take full responsibility for the final text, claims, and accompanying artifacts.

\subsection*{Reproducibility statement}

Code and reproduction instructions are available in our public repository at
\url{https://github.com/Kissmetothemoon/SetCommit}.
The repository provides the method implementation,
policy configurations, environment requirements, LLaDA/Fast-dLLM paired
runners, and validation scripts. Section~\ref{sec:method} describes the method, while
Appendices~\ref{app:implementation} and~\ref{app:configuration} provide
implementation details and policy settings. Appendix~\ref{app:protocol}
describes the model and evaluation protocols, and Appendix~\ref{app:proof}
states the assumptions and proof for net-progress preservation.



\bibliography{iclr2027_conference}

@inproceedings{DBLP:conf/nips/NieZYZOHZLWL25,
  author       = {Shen Nie and
                  Fengqi Zhu and
                  Zebin You and
                  Xiaolu Zhang and
                  Jingyang Ou and
                  Jun Hu and
                  Jun Zhou and
                  Yankai Lin and
                  Ji{-}Rong Wen and
                  Chongxuan Li},
  editor       = {Danielle Belgrave and
                  Cheng Zhang and
                  Laura N. Montoya and
                  Hsuan{-}Tien Lin and
                  Razvan Pascanu and
                  Piotr Koniusz and
                  Marzyeh Ghassemi and
                  Nancy Chen and
                  Iv{\'{a}}n Vladimir Meza Ru{\'{\i}}z and
                  Arturo Loaiza{-}Bonilla},
  title        = {Large Language Diffusion Models},
  booktitle    = {Advances in Neural Information Processing Systems 38: Annual Conference
                  on Neural Information Processing Systems 2025, NeurIPS 2025, San Diego,
                  CA, USA, December 2-7, 2025 / Mexico City, Mexico, November 30 - December
                  5, 2025},
  year         = {2025},
  url          = {http://papers.nips.cc/paper\_files/paper/2025/hash/48b383b24230e0e6e649d9c98dae4d8c-Abstract-Conference.html},
  bibsource    = {dblp computer science bibliography, https://dblp.org}
}

@article{DBLP:journals/corr/abs-2508-15487,
  author       = {Jiacheng Ye and
                  Zhihui Xie and
                  Lin Zheng and
                  Jiahui Gao and
                  Zirui Wu and
                  Xin Jiang and
                  Zhenguo Li and
                  Lingpeng Kong},
  title        = {Dream 7B: Diffusion Large Language Models},
  journal      = {CoRR},
  volume       = {abs/2508.15487},
  year         = {2025},
  url          = {https://doi.org/10.48550/arXiv.2508.15487},
  doi          = {10.48550/ARXIV.2508.15487},
  eprinttype   = {arXiv},
  eprint       = {2508.15487},
  bibsource    = {dblp computer science bibliography, https://dblp.org}
}

@article{DBLP:journals/corr/abs-2602-06953,
  author       = {Lizhuo Luo and
                  Zhuoran Shi and
                  Jiajun Luo and
                  Zhi Wang and
                  Shen Ren and
                  Wenya Wang and
                  Tianwei Zhang},
  title        = {{DAWN:} Dependency-Aware Fast Inference for Diffusion LLMs},
  journal      = {CoRR},
  volume       = {abs/2602.06953},
  year         = {2026},
  url          = {https://doi.org/10.48550/arXiv.2602.06953},
  doi          = {10.48550/ARXIV.2602.06953},
  eprinttype   = {arXiv},
  eprint       = {2602.06953},
  bibsource    = {dblp computer science bibliography, https://dblp.org}
}

@article{DBLP:journals/corr/abs-2107-03374,
  author       = {Mark Chen and
                  Jerry Tworek and
                  Heewoo Jun and
                  Qiming Yuan and
                  Henrique Pond{\'{e}} de Oliveira Pinto and
                  Jared Kaplan and
                  Harri Edwards and
                  Yuri Burda and
                  Nicholas Joseph and
                  Greg Brockman and
                  Alex Ray and
                  Raul Puri and
                  Gretchen Krueger and
                  Michael Petrov and
                  Heidy Khlaaf and
                  Girish Sastry and
                  Pamela Mishkin and
                  Brooke Chan and
                  Scott Gray and
                  Nick Ryder and
                  Mikhail Pavlov and
                  Alethea Power and
                  Lukasz Kaiser and
                  Mohammad Bavarian and
                  Clemens Winter and
                  Philippe Tillet and
                  Felipe Petroski Such and
                  Dave Cummings and
                  Matthias Plappert and
                  Fotios Chantzis and
                  Elizabeth Barnes and
                  Ariel Herbert{-}Voss and
                  William Hebgen Guss and
                  Alex Nichol and
                  Alex Paino and
                  Nikolas Tezak and
                  Jie Tang and
                  Igor Babuschkin and
                  Suchir Balaji and
                  Shantanu Jain and
                  William Saunders and
                  Christopher Hesse and
                  Andrew N. Carr and
                  Jan Leike and
                  Joshua Achiam and
                  Vedant Misra and
                  Evan Morikawa and
                  Alec Radford and
                  Matthew Knight and
                  Miles Brundage and
                  Mira Murati and
                  Katie Mayer and
                  Peter Welinder and
                  Bob McGrew and
                  Dario Amodei and
                  Sam McCandlish and
                  Ilya Sutskever and
                  Wojciech Zaremba},
  title        = {Evaluating Large Language Models Trained on Code},
  journal      = {CoRR},
  volume       = {abs/2107.03374},
  year         = {2021},
  url          = {https://arxiv.org/abs/2107.03374},
  eprinttype   = {arXiv},
  eprint       = {2107.03374},
  bibsource    = {dblp computer science bibliography, https://dblp.org}
}

@inproceedings{DBLP:conf/iclr/LightmanKBEBLLS24,
  author       = {Hunter Lightman and
                  Vineet Kosaraju and
                  Yuri Burda and
                  Harrison Edwards and
                  Bowen Baker and
                  Teddy Lee and
                  Jan Leike and
                  John Schulman and
                  Ilya Sutskever and
                  Karl Cobbe},
  title        = {Let's Verify Step by Step},
  booktitle    = {The Twelfth International Conference on Learning Representations,
                  {ICLR} 2024, Vienna, Austria, May 7-11, 2024},
  publisher    = {OpenReview.net},
  year         = {2024},
  url          = {https://openreview.net/forum?id=v8L0pN6EOi},
  bibsource    = {dblp computer science bibliography, https://dblp.org}
}

@inproceedings{DBLP:conf/nips/SahooASGMCRK24,
  author       = {Subham S. Sahoo and
                  Marianne Arriola and
                  Yair Schiff and
                  Aaron Gokaslan and
                  Edgar Marroquin and
                  Justin T. Chiu and
                  Alexander M. Rush and
                  Volodymyr Kuleshov},
  editor       = {Amir Globersons and
                  Lester Mackey and
                  Danielle Belgrave and
                  Angela Fan and
                  Ulrich Paquet and
                  Jakub M. Tomczak and
                  Cheng Zhang},
  title        = {Simple and Effective Masked Diffusion Language Models},
  booktitle    = {Advances in Neural Information Processing Systems 37: Annual Conference
                  on Neural Information Processing Systems 2024, NeurIPS 2024, Vancouver,
                  BC, Canada, December 10 - 15, 2024},
  year         = {2024},
  url          = {http://papers.nips.cc/paper\_files/paper/2024/hash/eb0b13cc515724ab8015bc978fdde0ad-Abstract-Conference.html},
  bibsource    = {dblp computer science bibliography, https://dblp.org}
}

@inproceedings{DBLP:conf/nips/Ben-HamuGSNK25,
  author       = {Heli Ben{-}Hamu and
                  Itai Gat and
                  Daniel Severo and
                  Niklas Nolte and
                  Brian Karrer},
  editor       = {Danielle Belgrave and
                  Cheng Zhang and
                  Laura N. Montoya and
                  Hsuan{-}Tien Lin and
                  Razvan Pascanu and
                  Piotr Koniusz and
                  Marzyeh Ghassemi and
                  Nancy Chen and
                  Iv{\'{a}}n Vladimir Meza Ru{\'{\i}}z and
                  Arturo Loaiza{-}Bonilla},
  title        = {Accelerated Sampling from Masked Diffusion Models via Entropy Bounded
                  Unmasking},
  booktitle    = {Advances in Neural Information Processing Systems 38: Annual Conference
                  on Neural Information Processing Systems 2025, NeurIPS 2025, San Diego,
                  CA, USA, December 2-7, 2025 / Mexico City, Mexico, November 30 - December
                  5, 2025},
  year         = {2025},
  url          = {http://papers.nips.cc/paper\_files/paper/2025/hash/510e22f4a2da5212a64f1591736e2eaf-Abstract-Conference.html},
  bibsource    = {dblp computer science bibliography, https://dblp.org}
}

@inproceedings{DBLP:conf/acl/ZhouHH26,
  author       = {Xueyu Zhou and
                  Yangrong Hu and
                  Jian Huang},
  editor       = {Maria Liakata and
                  Viviane P. Moreira and
                  Jiajun Zhang and
                  David Jurgens},
  title        = {{DOS:} Dependency-Oriented Sampler for Masked Diffusion Language Models},
  booktitle    = {Findings of the Association for Computational Linguistics, {ACL} 2026,
                  San Diego, California, United States, July 2-7, 2026},
  pages        = {17404--17419},
  publisher    = {Association for Computational Linguistics},
  year         = {2026},
  url          = {https://doi.org/10.18653/v1/2026.findings-acl.861},
  doi          = {10.18653/V1/2026.FINDINGS-ACL.861},
  bibsource    = {dblp computer science bibliography, https://dblp.org}
}

@article{DBLP:journals/corr/abs-2110-14168,
  author       = {Karl Cobbe and
                  Vineet Kosaraju and
                  Mohammad Bavarian and
                  Mark Chen and
                  Heewoo Jun and
                  Lukasz Kaiser and
                  Matthias Plappert and
                  Jerry Tworek and
                  Jacob Hilton and
                  Reiichiro Nakano and
                  Christopher Hesse and
                  John Schulman},
  title        = {Training Verifiers to Solve Math Word Problems},
  journal      = {CoRR},
  volume       = {abs/2110.14168},
  year         = {2021},
  url          = {https://arxiv.org/abs/2110.14168},
  eprinttype   = {arXiv},
  eprint       = {2110.14168},
  bibsource    = {dblp computer science bibliography, https://dblp.org}
}

@inproceedings{
wu2026fastdllm,
title={Fast-d{LLM}: Training-free Acceleration of Diffusion {LLM} by Enabling {KV} Cache and Parallel Decoding},
author={Chengyue Wu and Hao Zhang and Shuchen Xue and Zhijian Liu and Shizhe Diao and Ligeng Zhu and Ping Luo and Song Han and Enze Xie},
booktitle={The Fourteenth International Conference on Learning Representations},
year={2026},
url={https://openreview.net/forum?id=3Z3Is6hnOT}
}

@article{DBLP:journals/corr/abs-2108-07732,
  author       = {Jacob Austin and
                  Augustus Odena and
                  Maxwell I. Nye and
                  Maarten Bosma and
                  Henryk Michalewski and
                  David Dohan and
                  Ellen Jiang and
                  Carrie J. Cai and
                  Michael Terry and
                  Quoc V. Le and
                  Charles Sutton},
  title        = {Program Synthesis with Large Language Models},
  journal      = {CoRR},
  volume       = {abs/2108.07732},
  year         = {2021},
  url          = {https://arxiv.org/abs/2108.07732},
  eprinttype   = {arXiv},
  eprint       = {2108.07732},
  bibsource    = {dblp computer science bibliography, https://dblp.org}
}

@inproceedings{DBLP:conf/nips/MaYFW25,
  author       = {Xinyin Ma and
                  Runpeng Yu and
                  Gongfan Fang and
                  Xinchao Wang},
  editor       = {Danielle Belgrave and
                  Cheng Zhang and
                  Laura N. Montoya and
                  Hsuan{-}Tien Lin and
                  Razvan Pascanu and
                  Piotr Koniusz and
                  Marzyeh Ghassemi and
                  Nancy Chen and
                  Iv{\'{a}}n Vladimir Meza Ru{\'{\i}}z and
                  Arturo Loaiza{-}Bonilla},
  title        = {dKV-Cache: The Cache for Diffusion Language Models},
  booktitle    = {Advances in Neural Information Processing Systems 38: Annual Conference
                  on Neural Information Processing Systems 2025, NeurIPS 2025, San Diego,
                  CA, USA, December 2-7, 2025 / Mexico City, Mexico, November 30 - December
                  5, 2025},
  year         = {2025},
  url          = {http://papers.nips.cc/paper\_files/paper/2025/hash/db0ee27cb50dd9087b133f6e7d28a90e-Abstract-Conference.html},
  bibsource    = {dblp computer science bibliography, https://dblp.org}
}

@inproceedings{DBLP:conf/aaai/SongLLLHGHQ26,
  author       = {Yuerong Song and
                  Xiaoran Liu and
                  Ruixiao Li and
                  Zhigeng Liu and
                  Zengfeng Huang and
                  Qipeng Guo and
                  Ziwei He and
                  Xipeng Qiu},
  editor       = {Sven Koenig and
                  Chad Jenkins and
                  Matthew E. Taylor},
  title        = {Sparse-dLLM: Accelerating Diffusion LLMs with Dynamic Cache Eviction},
  booktitle    = {Fortieth {AAAI} Conference on Artificial Intelligence, Thirty-Eighth
                  Conference on Innovative Applications of Artificial Intelligence,
                  Sixteenth Symposium on Educational Advances in Artificial Intelligence,
                  {AAAI} 2026, Singapore, January 20-27, 2026},
  pages        = {33038--33046},
  publisher    = {{AAAI} Press},
  year         = {2026},
  url          = {https://doi.org/10.1609/aaai.v40i39.40586},
  doi          = {10.1609/AAAI.V40I39.40586},
  bibsource    = {dblp computer science bibliography, https://dblp.org}
}

@inproceedings{DBLP:conf/nips/IsraelBG25,
  author       = {Daniel Israel and
                  Guy Van den Broeck and
                  Aditya Grover},
  editor       = {Danielle Belgrave and
                  Cheng Zhang and
                  Laura N. Montoya and
                  Hsuan{-}Tien Lin and
                  Razvan Pascanu and
                  Piotr Koniusz and
                  Marzyeh Ghassemi and
                  Nancy Chen and
                  Iv{\'{a}}n Vladimir Meza Ru{\'{\i}}z and
                  Arturo Loaiza{-}Bonilla},
  title        = {Accelerating Diffusion LLMs via Adaptive Parallel Decoding},
  booktitle    = {Advances in Neural Information Processing Systems 38: Annual Conference
                  on Neural Information Processing Systems 2025, NeurIPS 2025, San Diego,
                  CA, USA, December 2-7, 2025 / Mexico City, Mexico, November 30 - December
                  5, 2025},
  year         = {2025},
  url          = {http://papers.nips.cc/paper\_files/paper/2025/hash/4c7d31b2da05a17f15101db5b37a1e24-Abstract-Conference.html},
  bibsource    = {dblp computer science bibliography, https://dblp.org}
}

@inproceedings{DBLP:conf/iclr/WeiZLZWQLZ26,
  author       = {Qingyan Wei and
                  Yaojie Zhang and
                  Zhiyuan Liu and
                  Puyu Zeng and
                  Yuxuan Wang and
                  Biqing Qi and
                  Dongrui Liu and
                  Linfeng Zhang},
  editor       = {Carl Vondrick and
                  Bharath Hariharan and
                  Colin Raffel and
                  Lerrel Pinto and
                  Diyi Yang and
                  Aleksandra Faust},
  title        = {Accelerating Diffusion Large Language Models with SlowFast Sampling:
                  The Three Golden Principles},
  booktitle    = {The Fourteenth International Conference on Learning Representations,
                  {ICLR} 2026, Rio de Janeiro, Brazil, April 23-27, 2026},
  publisher    = {proceedings.iclr.cc},
  year         = {2026},
  url          = {https://proceedings.iclr.cc/paper\_files/paper/2026/hash/08487598819cba9feca884ef0d442950-Abstract-Conference.html},
  bibsource    = {dblp computer science bibliography, https://dblp.org}
}

@inproceedings{DBLP:conf/nips/AustinJHTB21,
  author       = {Jacob Austin and
                  Daniel D. Johnson and
                  Jonathan Ho and
                  Daniel Tarlow and
                  Rianne van den Berg},
  editor       = {Marc'Aurelio Ranzato and
                  Alina Beygelzimer and
                  Yann N. Dauphin and
                  Percy Liang and
                  Jennifer Wortman Vaughan},
  title        = {Structured Denoising Diffusion Models in Discrete State-Spaces},
  booktitle    = {Advances in Neural Information Processing Systems 34: Annual Conference
                  on Neural Information Processing Systems 2021, NeurIPS 2021, December
                  6-14, 2021, virtual},
  pages        = {17981--17993},
  year         = {2021},
  url          = {https://proceedings.neurips.cc/paper/2021/hash/958c530554f78bcd8e97125b70e6973d-Abstract.html},
  bibsource    = {dblp computer science bibliography, https://dblp.org}
}

@inproceedings{DBLP:conf/iclr/LiuBNB25,
  author       = {Anji Liu and
                  Oliver Broadrick and
                  Mathias Niepert and
                  Guy Van den Broeck},
  title        = {Discrete Copula Diffusion},
  booktitle    = {The Thirteenth International Conference on Learning Representations,
                  {ICLR} 2025, Singapore, April 24-28, 2025},
  publisher    = {OpenReview.net},
  year         = {2025},
  url          = {https://openreview.net/forum?id=FXw0okNcOb},
  bibsource    = {dblp computer science bibliography, https://dblp.org}
}

@inproceedings{DBLP:conf/iclr/XuGKNXLEV25,
  author       = {Minkai Xu and
                  Tomas Geffner and
                  Karsten Kreis and
                  Weili Nie and
                  Yilun Xu and
                  Jure Leskovec and
                  Stefano Ermon and
                  Arash Vahdat},
  title        = {Energy-Based Diffusion Language Models for Text Generation},
  booktitle    = {The Thirteenth International Conference on Learning Representations,
                  {ICLR} 2025, Singapore, April 24-28, 2025},
  publisher    = {OpenReview.net},
  year         = {2025},
  url          = {https://openreview.net/forum?id=sL2F9YCMXf},
  bibsource    = {dblp computer science bibliography, https://dblp.org}
}

@inproceedings{DBLP:conf/nips/HendrycksBKABTS21,
  author       = {Dan Hendrycks and
                  Collin Burns and
                  Saurav Kadavath and
                  Akul Arora and
                  Steven Basart and
                  Eric Tang and
                  Dawn Song and
                  Jacob Steinhardt},
  editor       = {Joaquin Vanschoren and
                  Sai{-}Kit Yeung},
  title        = {Measuring Mathematical Problem Solving With the {MATH} Dataset},
  booktitle    = {Proceedings of the Neural Information Processing Systems Track on
                  Datasets and Benchmarks 1, NeurIPS Datasets and Benchmarks 2021, December
                  2021, virtual},
  year         = {2021},
  url          = {https://datasets-benchmarks-proceedings.neurips.cc/paper/2021/hash/be83ab3ecd0db773eb2dc1b0a17836a1-Abstract-round2.html},
  bibsource    = {dblp computer science bibliography, https://dblp.org}
}

@inproceedings{
liu2026dllmcache,
title={d{LLM}-Cache: Accelerating Diffusion Large Language Models with Adaptive Caching},
author={Zhiyuan Liu and Yicun Yang and Yaojie Zhang and Junjie Chen and Chang Zou and Qingyan Wei and Shaobo Wang and Yichen Zhu and Linfeng Zhang},
booktitle={Forty-third International Conference on Machine Learning},
year={2026},
url={https://openreview.net/forum?id=guLrgDb2jB}
}

@inproceedings{
li2026diffusion,
title={Diffusion Language Model Knows the Answer Before It Decodes},
author={Pengxiang Li and Yefan Zhou and Dilxat Muhtar and Lu Yin and Shilin Yan and Li Shen and Yi Liang and Soroush Vosoughi and Shiwei Liu},
booktitle={The Fourteenth International Conference on Learning Representations},
year={2026},
url={https://openreview.net/forum?id=g88nt4ieTG}
}

@inproceedings{DBLP:conf/nips/WangSSK25,
  author       = {Guanghan Wang and
                  Yair Schiff and
                  Subham S. Sahoo and
                  Volodymyr Kuleshov},
  editor       = {Danielle Belgrave and
                  Cheng Zhang and
                  Laura N. Montoya and
                  Hsuan{-}Tien Lin and
                  Razvan Pascanu and
                  Piotr Koniusz and
                  Marzyeh Ghassemi and
                  Nancy Chen and
                  Iv{\'{a}}n Vladimir Meza Ru{\'{\i}}z and
                  Arturo Loaiza{-}Bonilla},
  title        = {Remasking Discrete Diffusion Models with Inference-Time Scaling},
  booktitle    = {Advances in Neural Information Processing Systems 38: Annual Conference
                  on Neural Information Processing Systems 2025, NeurIPS 2025, San Diego,
                  CA, USA, December 2-7, 2025 / Mexico City, Mexico, November 30 - December
                  5, 2025},
  year         = {2025},
  url          = {http://papers.nips.cc/paper\_files/paper/2025/hash/d877ea0dd78bdbe54830670618c1de09-Abstract-Conference.html},
  bibsource    = {dblp computer science bibliography, https://dblp.org}
}

@inproceedings{DBLP:conf/icml/RutteFDOS025,
  author       = {Dimitri von R{\"{u}}tte and
                  Janis Fluri and
                  Yuhui Ding and
                  Antonio Orvieto and
                  Bernhard Sch{\"{o}}lkopf and
                  Thomas Hofmann},
  editor       = {Aarti Singh and
                  Maryam Fazel and
                  Daniel Hsu and
                  Simon Lacoste{-}Julien and
                  Felix Berkenkamp and
                  Tegan Maharaj and
                  Kiri Wagstaff and
                  Jerry Zhu},
  title        = {Generalized Interpolating Discrete Diffusion},
  booktitle    = {Forty-second International Conference on Machine Learning, {ICML}
                  2025, Vancouver, BC, Canada, July 13-19, 2025},
  series       = {Proceedings of Machine Learning Research},
  volume       = {267},
  publisher    = {{PMLR} / OpenReview.net},
  year         = {2025},
  url          = {https://proceedings.mlr.press/v267/von-rutte25a.html},
  bibsource    = {dblp computer science bibliography, https://dblp.org}
}

@inproceedings{DBLP:conf/nips/ZhaoSCDML25,
  author       = {Yixiu Zhao and
                  Jiaxin Shi and
                  Feng Chen and
                  Shaul Druckmann and
                  Lester Mackey and
                  Scott W. Linderman},
  editor       = {Danielle Belgrave and
                  Cheng Zhang and
                  Laura N. Montoya and
                  Hsuan{-}Tien Lin and
                  Razvan Pascanu and
                  Piotr Koniusz and
                  Marzyeh Ghassemi and
                  Nancy Chen and
                  Iv{\'{a}}n Vladimir Meza Ru{\'{\i}}z and
                  Arturo Loaiza{-}Bonilla},
  title        = {Informed Correctors for Discrete Diffusion Models},
  booktitle    = {Advances in Neural Information Processing Systems 38: Annual Conference
                  on Neural Information Processing Systems 2025, NeurIPS 2025, San Diego,
                  CA, USA, December 2-7, 2025 / Mexico City, Mexico, November 30 - December
                  5, 2025},
  year         = {2025},
  url          = {http://papers.nips.cc/paper\_files/paper/2025/hash/b67d9d07faf071532c420ed5b140756b-Abstract-Conference.html},
  bibsource    = {dblp computer science bibliography, https://dblp.org}
}

@inproceedings{DBLP:conf/iclr/DeschenauxG25,
  author       = {Justin Deschenaux and
                  Caglar Gulcehre},
  title        = {Beyond Autoregression: Fast LLMs via Self-Distillation Through Time},
  booktitle    = {The Thirteenth International Conference on Learning Representations,
                  {ICLR} 2025, Singapore, April 24-28, 2025},
  publisher    = {OpenReview.net},
  year         = {2025},
  url          = {https://openreview.net/forum?id=uZ5K4HeNwd},
  bibsource    = {dblp computer science bibliography, https://dblp.org}
}
\bibliographystyle{iclr2027_conference}

\clearpage
\appendix
\raggedbottom
\setlength{\floatsep}{12pt plus 2pt minus 2pt}
\setlength{\textfloatsep}{14pt plus 2pt minus 2pt}
\setlength{\intextsep}{12pt plus 2pt minus 2pt}
\clubpenalty=10000
\widowpenalty=10000

\section{Preliminary Commitment Analyses}
\label{app:preliminary}

Figures~\ref{fig:low_step} and~\ref{fig:commitment_evidence} present the
main preliminary evidence; Figure~\ref{fig:commitment_diagnostics}
provides the accompanying distribution and prompt-outcome diagnostics.
\subsection{Source States and Local-Response Probes}
Figure~\ref{fig:low_step} (left) uses the top-$k$ positions from the same source
state as Figure~\ref{fig:commitment_evidence} (left).
The 16 selected positions lie in 561--583 and 613--632; the histogram
uses 10-position bins and retains every selection in the 520--680 view.
Its vertical axis reports commitment counts, not posterior probabilities,
which are absent from the compact source data.

The local-response probe freezes a source state and reveals a predicted
token at a signed distance in
$\{\pm1,\pm2,\pm4,\pm8,\pm16,\pm32\}$.
Figure~\ref{fig:low_step} (right) averages the absolute change in the
masked target's predicted-token posterior, pooling probes by absolute
distance. The response is in probability units, not accuracy points or
Jensen--Shannon divergence. Distances 1/2/4/8/16/32 contain
3380/3071/2881/2486/2037/1977 probe observations, respectively.
The displayed SE is the sample standard deviation divided by the square
root of the probe count. Its mean $\pm1.96$ SE band is a probe-level
normal approximation, not a prompt-clustered confidence interval.

\subsection{Paired Continuation Design}
The paired GSM8K analysis uses 32 steps and 240 prompts. Source progress
is the fraction of the source denoising trajectory reached before the
intervention, not the fraction of tokens selected by the alternative.
The anchors $0.125$, $0.25$, $0.5$, and $0.75$ yield 960 matched states
per rule. Branches share the source state, commit count, token proposals,
continuation policy, and remaining evaluation budget.
The center panel averages each prompt's paired accuracy differences over
anchors before averaging over prompts; its 95\% bootstrap intervals
cluster all anchors of a prompt together. The right panel uses 10,000
paired prompt bootstrap resamples separately at each anchor and reports
the 2.5th/97.5th percentiles. These are pointwise intervals, not a
simultaneous band across progress.

\begin{figure}[!htb]
    \centering
    \includegraphics[width=\linewidth]{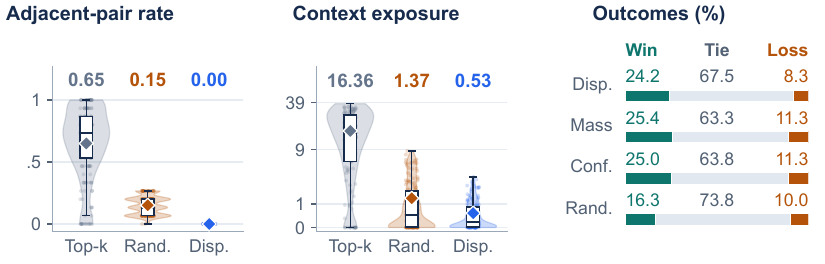}
    \caption{\textbf{Commitment distributions and paired outcome balance.}
    Left and center: adjacent-pair rate and mutual-context exposure across 960
    matched states per rule. Diamonds and values denote means;
    exposure uses $\log_{10}(1+x)$ spacing with raw-unit ticks.
    Right: win/tie/loss shares for 240 GSM8K prompts after averaging each
    prompt's paired score differences across progress anchors.
    Rows follow Figure~\ref{fig:commitment_evidence} (center).}
    \label{fig:commitment_diagnostics}
\end{figure}

\paragraph{Commitment structure across matched states.}
Confidence top-$k$, random, and dispersed selection produce mean
adjacent-pair rates of 0.65, 0.15, and approximately zero, respectively
(Figure~\ref{fig:commitment_diagnostics} (left)).
Their mean local context-exposure diagnostics are 16.36, 1.37, and 0.53
(Figure~\ref{fig:commitment_diagnostics} (center)).
Dispersed selection reduces exposure by 97\% relative to top-$k$,
consistent with leaving intervening positions available to condition
on the newly revealed context.

\subsection{Metrics and Aggregation}
The compact reproduction bundle supplies the selected positions for
Figure~\ref{fig:low_step} (left) and the points and standard errors for
Figure~\ref{fig:low_step} (center and right), 960 matched-state observations per rule
for Figure~\ref{fig:commitment_diagnostics} (left and center), and the effect estimates,
intervals, and prompt-outcome shares for
Figures~\ref{fig:commitment_evidence} (center and right) and~\ref{fig:commitment_diagnostics} (right).
For Figure~\ref{fig:low_step} (center), let $e_{p,t}$ be the recorded
excess-adjacent-pair rate and $b_{p,t}$ the commitment batch size for
prompt $p$ at step $t$. The plotted quantity is
\[
 E=\frac{1}{n}\sum_{p=1}^{n} E_p,\qquad
 E_p=\sum_t e_{p,t}(b_{p,t}-1),\qquad
 \mathrm{SE}(E)=\frac{\mathrm{SD}(E_1,\ldots,E_n)}{\sqrt n}.
\]
Each Discovery/Holdout summary has $n=240$ prompts. Shading shows
$E\pm1.96\,\mathrm{SE}(E)$ with prompts as the aggregation unit.
The compact summaries do not retain the state-level construction of
the excess-rate reference; this panel is therefore a descriptive
clustering diagnostic, with the controlled continuation comparison
reported separately in Figure~\ref{fig:commitment_evidence}.
The exposure panel uses $\log_{10}(1+x)$ for display, with raw-unit tick
labels; its mean marker is placed at the transform of the raw mean.
In Figure~\ref{fig:commitment_diagnostics} (left and center), boxes show the interquartile range and median,
with whiskers extending to observations within 1.5 interquartile ranges;
diamonds and printed values denote arithmetic means.
Distributions use all 960 states per rule, while dots show a deterministic
subset with fixed jitter for readability.
Figure~\ref{fig:commitment_evidence} (left) zooms to positions 400--950 and retains all 16
commitments per rule from the original example.
All displayed intervals are retained from the supplied summaries.

For the outcome shares, let $d_{p,a}$ be the paired score difference for
prompt $p$ at source-progress anchor $a$.
The script averages these differences within each prompt and classifies
the sign of $\bar d_p$ as win, tie, or loss.
Consequently, Figure~\ref{fig:commitment_diagnostics} (right) summarizes the
frequency of positive and negative prompt-level effects, while
Figure~\ref{fig:commitment_evidence} (center) reports their mean magnitude.
The outcome panel presents all twelve shares as 100\% stacked horizontal bars.
Segment widths retain full precision, with percentage labels rounded to
one decimal place above each bar. Its rows follow Figure~\ref{fig:commitment_evidence} (center): dispersed, posterior-mass-matched,
confidence-stratified, and random controls.

\subsection{Effect Estimates and Interpretation}
The preliminary interventions make commitment organization visible
through dispersed, posterior-mass-matched, confidence-stratified, and
random controls.
The dispersed effect is 4.27 percentage points (95\% CI: $[1.25,7.19]$).
Posterior-mass matching controls selected confidence mass, while
confidence stratification controls the confidence profile; their
effects are 3.33 pp ($[0.00,6.56]$) and 3.54 pp ($[0.21,6.67]$).
Unrestricted random selection yields 1.04 pp ($[-2.19,4.17]$).
For dispersed selection at 12.5/25/50/75\% progress, the effects are
10.42/3.75/$-0.42$/3.33 pp, with intervals
$[5.00,15.83]$, $[-0.83,8.33]$, $[-4.17,3.33]$, and $[0.42,6.25]$.
The largest point estimate occurs early; the mid-trajectory intervals
include zero, and the late improvement makes the trend nonmonotonic.
Comparing these pointwise intervals does not establish a significant
difference between progress anchors.
The full-module experiments in Section~\ref{sec:mechanism_results}
extend this analysis to the finalized coordinated selector and its
combination with exchange.

\FloatBarrier
\Needspace{22\baselineskip}
\section{Net-Progress Accounting}
\label{app:proof}

For the remask exchange in Section~\ref{sec:integration}, let $A_t$ denote fresh
commitments and $R_t$ previously resolved positions remasked at step $t$.
The mask update and net-quota contract are
\begin{equation}
    M_{t+1}=(M_t\setminus A_t)\cup R_t,
    \qquad
    |M_t|-|M_{t+1}|=|A_t|-|R_t|.
    \label{eq:net_progress}
\end{equation}
\begin{equation}
    |A_t|-|R_t|=q_t,
    \label{eq:quota_contract}
\end{equation}

\begin{proposition}[Net-progress preservation]
\label{prop:progress}
Suppose the host and module satisfy
Equations~\eqref{eq:net_progress}--\eqref{eq:quota_contract} at every
executed step.
Then, after \(s\) steps,
\[
    |M_s|=|M_0|-\sum_{t=0}^{s-1}q_t.
\]
In particular, a feasible schedule with
\(\sum_{t=0}^{T-1}q_t=|M_0|\) resolves the initial mask set in \(T\) steps,
assuming no earlier task stopping and no additional positions introduced.
\end{proposition}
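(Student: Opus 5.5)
The plan is a one-step identity followed by induction on $s$, plus a remark on why the cardinality identity in Equation~\eqref{eq:net_progress} is itself legitimate. For the single step, I will first check that the mask update $M_{t+1}=(M_t\setminus A_t)\cup R_t$ actually yields $|M_t|-|M_{t+1}|=|A_t|-|R_t|$. This requires $A_t\subseteq M_t$ and $R_t\cap M_t=\varnothing$.

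Both facts come from the construction in Section~\ref{sec:integration}. Fresh commitments satisfy $A_t=S_t\cup E_t$ with $S_t\subseteq M_t$ and $E_t\subseteq M_t\setminus S_t$. Withdrawn positions satisfy $R_t\subseteq C_t$, and $C_t$ is disjoint from $M_t$ within the editable block. Hence $(M_t\setminus A_t)$ and $R_t$ are disjoint, so
\[
|M_{t+1}|=|M_t|-|A_t|+|R_t|.
\]
Since the proposition assumes Equation~\eqref{eq:net_progress} as a hypothesis, this check only confirms that the assumption is consistent with the module. Combining it with Equation~\eqref{eq:quota_contract} gives $|M_{t+1}|=|M_t|-q_t$ at every executed step.

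Next I induct on $s$. The base case $s=0$ holds because the sum is empty. For the inductive step, assume $|M_s|=|M_0|-\sum_{t<s}q_t$ and apply the one-step identity at $t=s$:
\[
|M_{s+1}|=|M_0|-\sum_{t=0}^{s}q_t.
\]
This is just telescoping of $|M_t|-|M_{t+1}|=q_t$.

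For the corollary, I set $s=T$ under $\sum_{t<T}q_t=|M_0|$, which gives $|M_T|=0$. So every initially masked position is resolved after $T$ steps. Two side conditions make this valid. First, all $T$ steps must be executed, which is the no-early-stopping assumption. Second, the index set must stay fixed, so no new positions enter $M_t$ from outside; otherwise Equation~\eqref{eq:net_progress} would acquire an extra term.

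Feasibility also deserves a short remark. Each step needs $q_t\le |M_t|$, so that $|S_t|=q_t$ is achievable. The cap $k_t\le |M_t|-q_t$ from Equation~\eqref{eq:remask-cap} ensures $|S_t|+|E_t|\le |M_t|$. Under the induction hypothesis, $|M_t|=\sum_{t'\ge t}q_{t'}\ge q_t$, so this holds automatically for nonnegative quotas.

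The main obstacle is not the algebra but stating the disjointness assumptions precisely. A withdrawn position in $R_t$ must not also be counted in $A_t$. This holds because $R_t$ is drawn from visible positions, while $A_t$ is drawn from masked ones.
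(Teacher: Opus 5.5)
Your proposal is correct and follows essentially the same route as the paper: you use $A_t\subseteq M_t$ and $R_t\cap M_t=\varnothing$ to get disjointness of $M_t\setminus A_t$ and $R_t$, derive $|M_{t+1}|=|M_t|-|A_t|+|R_t|$, apply the contract $|A_t|-|R_t|=q_t$, and telescope. Your feasibility remark is a harmless addition that the paper leaves implicit: it notes that $q_t\le|M_t|$ holds automatically for nonnegative quotas summing to $|M_0|$, and that the cap $k_t$ keeps $|S_t|+|E_t|\le|M_t|$.
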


\begin{proof}
Let \(A_t=S_t\cup E_t\subseteq M_t\) contain fresh commitments and
\(R_t\subseteq M_t^c\) contain previously resolved positions remasked by the
accepted update, with \(S_t\cap E_t=\varnothing\), \(|S_t|=q_t\), and \(|E_t|=|R_t|\).
The sets \(M_t\setminus A_t\) and \(R_t\) are disjoint, giving
\[
 |M_{t+1}|=|M_t|-|A_t|+|R_t|.
\]
Under the fixed-budget contract \(|A_t|-|R_t|=q_t\), this becomes
\(|M_{t+1}|=|M_t|-q_t\).
Telescoping over the executed steps proves
Proposition~\ref{prop:progress}.
Overwrites leave mask status unchanged and do not appear in this identity.
\end{proof}

The conditions are local and directly testable.
A trace records fresh reveals, remasks, posterior gains, active
block, quota, and remaining masks.
An admissible exchange must respect edit eligibility and the fresh positions
available to satisfy the quota.
Early task stopping is recorded separately from completion of a fixed-length
mask schedule.
For blockwise generation, an explicit ledger records positions entering
each active block.

\FloatBarrier
\Needspace{8\baselineskip}
\section{Implementation Details}
\label{app:implementation}

The Native/Fast implementation uses policy
\texttt{dacs\_v5\_exchange\_020}. Its details follow before the separate
EB adapter in Appendix~\ref{app:eb}. Sets are restricted to the current
host-editable block; positions retain their sequence indices.

\subsection{Native and Fast Policies}
\paragraph{Hyperparameters and tolerances.}
For the Native/Fast configuration, the shortlist multiplier is
$\kappa=4$, baseline retention is $\rho=0.75$, and the penalty cap is
$\beta=0.10$. Exchange uses protection rank $K=2$, ratio $\alpha=0.20$,
and progress window $[s_{\min},s_{\max}]=[0,0.8]$.
The posterior-gain margin $\epsilon_1=0.002$ applies to both withdrawal
eligibility and each paired replacement, matching
\texttt{min\_posterior\_gain}. The risk-comparison tolerance is
$\epsilon_2=10^{-7}$; it accommodates floating-point roundoff.
Appendix~\ref{app:configuration} lists the corresponding configuration
fields and the distinct EB settings.

\Needspace{13\baselineskip}
\paragraph{Signals and candidate construction.}
At a single forward pass, compute $y_i=\arg\max_v z_i(v)$,
$c_i=p_i(y_i)$, $o_i=p_i(x_i)$, and margin $m_i=z_{i,(1)}-z_{i,(2)}$.
Normalized entropy is
\[
u_i=-\frac{\sum_{v\in\mathcal V}p_i(v)\log p_i(v)}{\log|\mathcal V|}.
\]
Margins are normalized over the masked candidates:
\[
\bar m_i=\frac{m_i-\min_{j\in M_t}m_j}
{\max_{j\in M_t}m_j-\min_{j\in M_t}m_j}.
\]
If the denominator is below $10^{-8}$, all finite candidates receive
score one. The posterior baseline $B_t$ is top-$q_t$ by $c_i$, with
ineligible positions masked out and the underlying top-$k$ tie behavior
retained. Margin ordering uses position index to break ties.
The first $\min(|M_t|,\lceil4q_t\rceil)$ margin-ranked positions form a
shortlist, augmented with all of $B_t$.
During greedy construction, remaining slots are reserved for baseline
positions whenever necessary to meet $|S_t\cap B_t|\ge\lceil0.75q_t\rceil$.
Soft filtering is relaxed if needed to fill the quota. Invalid host
quotas are recorded as failed runs.

\paragraph{Progress and set risk.}
Let $s_t=(t+1)/T$ denote progress supplied by the host's decoding loop,
using its configured progress scale $T$. For adaptive hosts this scale
is distinct from the realized NFE reported after generation.
The schedules are $\lambda_t=0.03-0.02s_t$ and $\tau_t=16-12s_t$.
For this recorded policy, $d_t=|M_t|/L$ is the mask density and
the implementation-specific risk kernel is
\begin{equation}
    a_{ij}^{(t)}=d_t\exp(-|i-j|/\tau_t)u_i^{(t)}u_j^{(t)}
    \quad(i\ne j),\qquad a_{ii}^{(t)}=0.
    \label{eq:local_risk}
\end{equation}
The capped greedy selection score is
\begin{equation}
    g_t(i\mid S)=\bar m_i^{(t)}-
    \min\!\left\{\lambda_t\sum_{j\in S}a_{ij}^{(t)},\,0.10\right\}.
    \label{eq:selection_score}
\end{equation}
With the mask density and current entropy held fixed, set risk is
\begin{equation}
H_t(U)=\frac12\sum_{i,j\in U}a_{ij}^{(t)}.
\label{eq:set_risk}
\end{equation}
The kernel is a locality-based proxy, not a joint-dependency estimator.
Selection uses the capped incremental penalty in
Equation~\eqref{eq:selection_score}; there is no hard selection-risk budget.

\paragraph{Exchange acceptance.}
Let $C_t$ be the visible positions in the editable block.
Eligible withdrawals satisfy $x_i\ne y_i$,
$x_i\notin\operatorname{Top2}(z_i)$, and $c_i>o_i+0.002$;
they are ordered by increasing $o_i$.
The routine tries exchange sizes from the cap down to one, constructs
$E_t$ within $M_t\setminus S_t$ using a posterior baseline and the same
greedy selection rule, and pairs sorted extra-token posteriors with
sorted withdrawn-token posteriors. Every pair must exceed the
$0.002$ gain threshold, and
\[
H_t\bigl((C_t\setminus R_t)\cup S_t\cup E_t\bigr)
\le H_t(C_t\cup S_t)+10^{-7}.
\]
The first passing size is accepted; otherwise $R_t=E_t=\varnothing$.
All probabilities are evaluated under the current forward, including
predictions at still-visible old positions. These are local screening
conditions; final answer quality is assessed by the task scorer.

\paragraph{Native and Fast integration.}
The uncached LLaDA fixed-budget path operates on 32-token blocks with
uniformly assigned quotas. Fast-dLLM v1 uses the corresponding LLaDA/Dream
dual-cache paths. In threshold mode, confidence at least $0.9$, together
with forced acceptance of a highest-confidence position, determines $q_t$.
The module preserves this count; reordered positions need not individually
satisfy the host's original confidence threshold. Future counts and total
NFE can change as the trajectory changes.
The first forward establishes the cache; subsequent forwards refresh
the active block through the host's replacement-position mask.
Fast-Dream retains its native first-token prefill action before later
block updates enter the selector. Fixed-budget runs do not terminate
early on EOS; adaptive blocks terminate when no masks remain.
Validation checks disjointness, quota preservation, exchange gain, set
risk, and final mask completion. Violations are recorded as failed or rejected runs.

\subsection{EB-Sampler Integration}
\label{app:eb}

\paragraph{Implementation and scope.}
Our EB reimplementation follows \citet{DBLP:conf/nips/Ben-HamuGSNK25},
arXiv v1, and the archived author source. The release manifest records
source checksums and revisions.
The audited LLaDA adapter uses \texttt{GSAI-ML/LLaDA-8B-Instruct}, BF16,
batch size one, and a 512-position generation block. The documented E5
runs cover GSM8K-COT and MBPP. Dream results use a separate runner for
\texttt{Dream-org/Dream-v0-Instruct-7B} with 256 generated positions.
The adapter validation described here covers the LLaDA route.

\paragraph{Entropy-based quota.}
Let $h_i$ denote the entropy computed by the EB scoring path and let
$\pi$ order the remaining masked positions by increasing error proxy.
LLaDA uses one minus confidence; the separate Dream policies use entropy.
EB chooses the longest prefix satisfying
\begin{equation}
q_t=\max\left\{k\in\{1,\ldots,|M_t|\}:\;
\sum_{j=1}^{k}h_{\pi_j}-\max_{1\le j\le k}h_{\pi_j}\le\gamma\right\},
\qquad B_t^{\mathrm{EB}}=\{\pi_1,\ldots,\pi_{q_t}\}.
\label{eq:eb_quota}
\end{equation}
For the nonnegative thresholds used here, the single-position prefix is
always admissible. At $\gamma=0$ this often yields one commitment, but
zero-entropy candidates can permit more. The scan pairs EB and
EB+\method{} at four recorded $\gamma$ settings. In the LLaDA adapter,
token proposals are the raw-logit argmax; EOS suppression affects only
the confidence/entropy scoring path, not the generated-token argmax.

\paragraph{Module contract and policy.}
After the forward, EB computes $q_t$ and $B_t^{\mathrm{EB}}$; the module
uses this prefix as its baseline, selects $q_t$ base positions, and applies
remask-and-recommit exchange. The runtime enforces
$|A_t|-|R_t|=q_t$. No additional model forward is introduced, but the
changed trajectory can change later quotas and total NFE.
The coordinated set can exceed the entropy bound in
Equation~\eqref{eq:eb_quota}; the integration preserves EB's quota.
Both the original and coordinated-set entropy bounds are logged.
The documented EB runs use \texttt{dacs\_v5\_quality\_guard}, with
posterior reliability, 90\% baseline retention, a $2q_t$ shortlist,
and structural protection (Table~\ref{tab:configuration}).
Exchange is capped at $\lfloor0.10q_t\rfloor$, subject to fresh-position
availability, progress in $[0.10,0.60]$, changed predictions, top-2
exclusion, posterior gain above $0.005$, and nonincreasing set risk.
Native EB disables exchange. A nonzero configured ratio is not an
observed exchange rate; at $q_t=1$ the cap is zero.

\paragraph{Stopping, cost, and validation.}
LLaDA EB continues until the full generation region is unmasked.
Task stop strings are applied afterward: actual NFE counts all forwards
to completion, whereas effective NFE identifies the trace point at which
the stop condition and its preceding tokens have been resolved.
The nominal \texttt{steps=512} field accounts for generation length,
not a fixed EB step budget. Effective NFE must not be substituted for
actual computation or measured latency.
The separate Dream runner extends its nominal step allowance when masks
remain.
Both routes recompute the full sequence without a model KV cache, so
remasked positions are naturally refreshed at the next regular forward.
A step-local selector cache only avoids repeated set construction.
The LLaDA audit checks valid commit sets, per-step and total net progress,
risk nonincrease, trace lengths, NFE records, and final mask completion;
invalid outputs are recorded as rejected runs.

\FloatBarrier
\Needspace{34\baselineskip}
\section{Experimental Policy Configuration}
\label{app:configuration}

Table~\ref{tab:configuration} compares the Native/Fast and EB module
settings. EB evaluates the shared interface with a host-specific configuration.

\begin{table}[H]
\centering
\caption{Native/Fast and documented EB module configurations.
Exchange ratios bound attempts, not observed edit frequencies.}
\label{tab:configuration}
\small
\renewcommand{\arraystretch}{1.08}
\begin{tabularx}{\linewidth}{@{}lXX@{}}
\toprule
\rowcolor{TableHead}
Parameter & Native/Fast & EB\\
\midrule
\multicolumn{3}{@{}l}{\textit{Set selection}}\\
\texttt{reliability} & \texttt{margin} & \texttt{posterior}\\
\texttt{use\_dependency\_risk} & \texttt{true} & \texttt{true}\\
\texttt{lambda\_start/end} & \(0.03/0.01\) & \(0.03/0.01\)\\
\texttt{tau\_start/end} & \(16/4\) & \(16/4\)\\
\texttt{shortlist\_multiplier} & \(4.0\) & \(2.0\)\\
\texttt{min\_topk\_overlap} & \(0.75\) & \(0.90\)\\
\texttt{max\_penalty\_fraction} & \(0.10\) & \(0.10\)\\
\texttt{risk\_budget\_scale} & \texttt{null} & \texttt{null}\\
\texttt{structure\_protection} & \texttt{false} & \texttt{true}\\
\texttt{structure\_penalty} & \(0.02\) (inactive) & \(0.10\)\\
\addlinespace[3pt]
\multicolumn{3}{@{}l}{\textit{Remask exchange}}\\
\texttt{exchange} & \texttt{true} & \texttt{true}\\
\texttt{exchange\_ratio} & \(0.20\) & \(0.10\)\\
\texttt{min\_posterior\_gain} & \(0.002\) & \(0.005\)\\
\texttt{exchange\_window} & \([0.0,0.8]\) & \([0.10,0.60]\)\\
\texttt{exchange\_top\_k} & \(2\) & \(2\)\\
\bottomrule
\end{tabularx}
\end{table}

The public repository provides the Native/Fast and EB policy files,
LLaDA/Fast-dLLM generation entry points, and a release manifest. The
Native/Fast confidence floor is zero. Main comparisons and separately
executed sensitivity runs retain their own configuration IDs.

\FloatBarrier
\Needspace{8\baselineskip}
\section{Model and Evaluation Protocol}
\label{app:protocol}

The model roles and decoding routes are summarized below. Configuration
and execution details for the released LLaDA/Fast-dLLM runner are documented
in \texttt{REPRODUCE.md}; its checkpoint revision and release file hashes
are recorded in \texttt{RELEASE\_MANIFEST.json}.

\begin{table}[!htbp]
\centering
\caption{Model roles and evaluation routes. Transfer settings use
model-specific decoding; Fast adapter coverage is stated for the two
primary models. EB integration is documented separately.}
\label{tab:model_registry}
\small
\renewcommand{\arraystretch}{1.08}
\begin{tabularx}{\linewidth}{@{}llX@{}}
\toprule
\rowcolor{TableHead}
Model & Role & Decoding route\\
\midrule
LLaDA-8B & Primary & Uncached fixed; Fast fixed/adaptive; audited EB\\
Dream-7B-Instruct & Primary & Native entropy; Fast fixed/adaptive; separate EB runner\\
LLaDA2.1 & Transfer & Model-specific decoding\\
LLaDA2.2 & Transfer & Model-specific decoding\\
\bottomrule
\end{tabularx}
\end{table}

\subsection{Generation and Scoring}
\paragraph{Prompt and generation settings.}
Each model uses the template and generation rule appropriate to its
checkpoint and base/instruct variant.
The registry includes BOS handling, few-shot examples, generation length,
block size, temperature, token sampling, EOS, and task stop rules.
For LLaDA, the official block-32 native reference and uncached fixed-NFE
sweeps are distinct evaluation routes. Dream's native reference uses its
chat template, entropy sampler, temperature $0.1$, and top-$p=0.9$.
Its unmodified hook leaves native sampling intact, whereas the module
uses deterministic top-1 proposals from the same logits. Consequently,
these Dream native/Fast comparisons measure the complete decoding change,
including proposal generation, rather than a selector-only intervention.
EB follows its separate proposal and stopping protocol
(Appendix~\ref{app:eb}). Transfer models use their own routes without assuming
universal cache compatibility. Paired inputs and task scorers are shared.
Before scoring, outputs undergo token-level EOS truncation, task stop-string
truncation, and removal of special tokens, in that order.

\paragraph{Task scoring.}
GSM8K uses a versioned answer extractor; MATH-500 uses a versioned
mathematical equivalence procedure.
MBPP and HumanEval use the specified functional test suite and split.
Formatting validity is defined per task and reported separately.
Failed, truncated, and unparseable generations remain in the primary
denominator under the declared scoring policy.

\FloatBarrier
\subsection{Quality Reference and Operating-Point Selection}
\paragraph{Common native-128 quality reference.}
\label{app:qref}
Tables~\ref{tab:main}, \ref{tab:quality_margin}, and~\ref{tab:quality64}
use one shared native-128 observation per model--task pair. The reference
has 128 actual model evaluations, no cache, and \method{} disabled.
LLaDA uses the uncached fixed-NFE route; this is distinct from the
separate official block-32 reference route. Dream uses its unchanged
native entropy sampler, chat template, temperature 0.1, and top-$p=0.9$.
Task inputs, output processing, and scorers follow the protocol above.
We retain the reference realization used by Table~\ref{tab:quality64}
and do not mix in alternative-seed reference scores. This choice is
shared across all compared hosts and all quality margins.

Table~\ref{tab:native_reference} specifies each reference as native, module off, NFE 128,
on the named model and task. It lists the scored counts and both the
quality anchor and the main target to six decimals; operating-point
selection uses the full precision of $100c/n-3$, not the rounded display.
The reference registry and selected configuration identifiers accompany
the source, so the same anchors can be used to regenerate the tables.

\begin{table}[!htbp]
\centering
\caption{Shared native-128 quality references and three-point targets.
Selection uses unrounded task scores.}
\label{tab:native_reference}
\small
\begin{tabular*}{\linewidth}{@{\extracolsep{\fill}}llrrr@{}}
\toprule
Model & Task & Correct / total & $Q_{\mathrm{ref}}$ (\%) & $q_\star$ (\%)\\
\midrule
LLaDA-8B & GSM8K & 1003/1319 & 76.042456 & 73.042456\\
LLaDA-8B & MATH-500 & 177/500 & 35.400000 & 32.400000\\
LLaDA-8B & MBPP & 203/427 & 47.540984 & 44.540984\\
LLaDA-8B & HumanEval & 66/164 & 40.243902 & 37.243902\\
Dream-7B & GSM8K & 1059/1319 & 80.288097 & 77.288097\\
Dream-7B & MATH-500 & 218/500 & 43.600000 & 40.600000\\
Dream-7B & MBPP & 255/427 & 59.718970 & 56.718970\\
Dream-7B & HumanEval & 90/164 & 54.878049 & 51.878049\\
\bottomrule
\end{tabular*}
\end{table}

For example, the LLaDA/HumanEval target is 37.243902\%, whereas the
32-NFE candidate scores 37.195122\%. Both display as 37.2\% at one
decimal, but that candidate is ineligible. Native cost denominators in
Table~\ref{tab:main} are the fastest native points meeting the target;
they are distinct from the 128-NFE quality reference.

\paragraph{Quality--cost objective.}
For method $m$ and evaluated configurations $\mathcal H_m$, the
matched-quality cost is
\begin{equation}
 C_m(q_\star)=\min_{h\in\mathcal H_m:\,Q_m(h)\ge q_\star}L_m(h).
 \label{eq:quality_cost}
\end{equation}
The quality anchor and tolerance are fixed per model--task pair.
We report evaluation compression $N_B/N_{B+\method{}}$ separately from
the reduction fraction $1-N_{B+\method{}}/N_B$ and measured latency.
Requested steps, actual model evaluations, and partial recomputation
are recorded separately.

\paragraph{Configuration selection.}
Fixed-budget sweeps cover supported 16/32/64/128 settings and valid
intermediate points; adaptive hosts receive threshold sweeps.
Configurations are selected on development data and confirmed at the
common target on held-out data. The main quality margin is three
percentage points, with one- and two-point sensitivity results in
Appendix~\ref{app:analysis}.
The development registry records all examples and checkpoints used for
parameter selection or method design.
The 10\% development pilot is part of the selection data.
Reports on complete official test sets retain this selection provenance
and distinguish development examples from held-out confirmation.
Quality margins and candidate grids are fixed before final confirmation.

\subsection{Randomness and Uncertainty}
\paragraph{Sampling units.}
Stochastic decoding uses paired seeds and records the realized sampling
scheme. For deterministic argmax decoding, the sampling unit is the prompt;
seed labels do not create additional observations.
Bootstrap resampling clusters by prompt and retains joint seed and method
records. Main-text results report point estimates; Table 8 supplements the main table with 95\% confidence intervals for quality, NFE, and speedup. These intervals quantify
prompt-level and timing-repeat uncertainty, distinct from seed variation.

\FloatBarrier
\Needspace{8\baselineskip}
\section{Interventions, Ablations, and Runtime}
\label{app:analysis}

\subsection{Controlled Interventions and Ablations}
\paragraph{Source states and interventions.}
The reported same-state aggregate comparison covers LLaDA-8B/GSM8K
at 32 NFE and progress fractions \(0.125,0.25,0.5,0.75\).
Figure~\ref{fig:mechanism} subtracts the confidence-matched and
stratified-random continuation gains from the coordinated-selector
gain at each source progress. The underlying host-relative values
are preserved in Figure~\ref{fig:budget_sensitivity}.
Broader source-policy and budget coverage requires corresponding
state-level continuation records.

For a position \(i\) revealed using its current prediction, the conditional
response at \(j\) is
\[
 \Delta_{j\leftarrow i}
 =
 \mathrm{JS}\!\left(
 p_j(\cdot\mid x^{(t)}),
 p_j(\cdot\mid x^{(t)},i\leftarrow\hat x_i^{(t)})
 \right).
\]
This measures a model-distribution response; continuation experiments
separately measure final success.
It is a distribution-level extension of the absolute posterior-change
probe in Figure~\ref{fig:low_step} (right), with its metric reported separately.
Vocabulary truncation includes the residual probability mass.
Commitment-set branches share the source state, quota, current proposals,
continuation policy, and remaining budget.
Matching diagnostics summarize confidence, distance, and progress.
Offline probe evaluations are recorded separately from generation cost.

\paragraph{Ablation grid.}
The selection-by-remasking design is followed by spacing,
stratified-random, and shuffled-risk controls. Remasking withdraws
visible positions and supplies equally many additional fresh
commitments; it does not mean swapping two candidates within the
current masked set. Structural protection is disabled in the main
Native/Fast policy. An overlap value of one fixes the base position
set to the posterior baseline but does not disable remasking.
Figure~\ref{fig:mechanism} computes component contrasts from the same
32-NFE quality records as Table~\ref{tab:fixed_budget}, extended to
MATH-500 and MBPP. Its interaction term compares the remask increment
with versus without coordinated selection.

\paragraph{Frozen configurations and cost points.}
The matrices in Figure~\ref{fig:ablation} retain all 16 frozen/retuned
model--task pairs. Frozen gains are measured against their corresponding
hosts; retuning gains are direct quality differences within each pair.
The mean retuning increment averages all 16 primary and transfer
model--task cells. Development and held-out roles follow the selection
protocol in Appendix~\ref{app:protocol}.
The 16 deployment points use the full-policy and host-baseline
32-NFE ablation records for two primary models, four tasks, and
Native/Fast-fixed hosts. Their coordinates are
$100(L_{11}/L_{00}-1)$ and $Q_{11}-Q_{00}$; the latency includes signal
handling, selection, remask exchange, and applicable cache refresh.
The Dream proposal-generation qualification in
Appendix~\ref{app:protocol} applies to these pipeline comparisons.

\clearpage
\subsection{Timing and Resource Use}
\paragraph{Timing.}
Comparisons use warm-up, synchronized device timing, and interleaved repeats
on a dedicated L20 or fixed device group.
Every operating point retains generated outputs, actual evaluations,
effective length, latency, and peak memory.
A representative prompt set supports repeated timing, with complete-set
confirmation for key configurations.
Signal handling, selection, edits, and refresh have non-overlapping
instrumentation boundaries.
Partial-backbone recomputation is recorded as work and time.

\paragraph{Cross-server consistency.}
Both servers retain code, model, protocol, configuration, and sample IDs.
A complete paired comparison uses one matched environment.
The second server's EB scan supplies candidate configurations. Inclusion
in the common latency table requires paired reruns on the main timing
device. Repeated validation records retain their original sample
identities.

\paragraph{Additional experimental results.}
Figure~\ref{fig:systems} (top left) separates model time, module time,
and other overhead at the matched-quality operating points. Total latency
decreases from 7.49 to 5.92 s on the native host (21.0\%) and from 3.73
to 3.15 s on the cached host (15.6\%). Fewer model evaluations offset
the added coordination cost. The top-right bars are cumulative gains
over each host: selection yields 3.8/4.2 pp on full/cached forwards,
and selection plus exchange yields 5.0/5.8 pp. Thus exchange adds
1.2/1.6 pp beyond selection.

At fixed 32 NFE, increasing generation length from 256 to 2048 raises
relative overhead from 1.8\% to 2.2\% on full forwards and from 5.9\%
to 8.2\% under caching (bottom left). Caching reduces the host latency
denominator and adds refresh work, explaining the larger relative cost.
At length 1024, memory increases by 0.28/0.36 GiB for LLaDA native/cached
and 0.20/0.19 GiB for Dream native/cached (bottom right).

\begin{figure}[!htb]
\centering
\includegraphics[width=\linewidth]{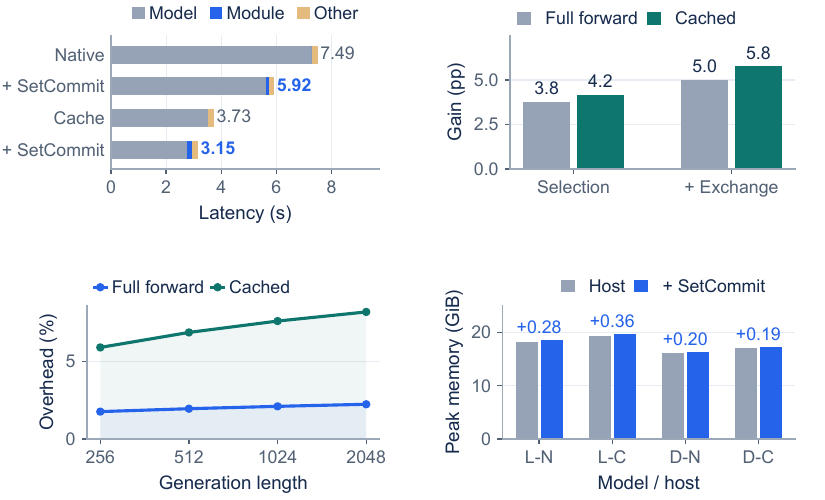}
\caption{ \textbf{Systems and robustness analyses.}
Top left: matched-quality LLaDA-8B/GSM8K runtime; module time includes signal
handling, selection, exchange, and cache refresh.
Top right: cumulative gains over each host for selection alone and
selection plus exchange, on LLaDA-8B/GSM8K at 32 NFE.
Bottom left: $100(L_{\mathrm{+SC}}/L_{\mathrm{host}}-1)$ across generation
lengths on the same model/task at 32 NFE.
Bottom right: peak memory at length 1024: L/D denote LLaDA-8B/Dream-7B and N/C
denote native/cached hosts. Labels above memory bars give added GiB.}
\label{fig:systems}
\end{figure}

\clearpage
\subsection{Quality Margins and Fixed-Budget Retention}
Table~\ref{tab:quality_margin} repeats operating-point selection at three
quality margins using the same sweeps. Table~\ref{tab:quality64} reports
fixed-64 quality retention relative to the shared native-128 reference.

\begin{table}[!htbp]
\centering
\caption{ \textbf{Quality-margin sensitivity.}
Geometric-mean incremental speedup across two primary models and four
tasks. Each target is reference quality minus the stated margin;
operating points are reselected from the same sweep.
Shading marks the main-table margin; entries are speedup factors.}
\label{tab:quality_margin}
\small
\begin{tabular*}{\linewidth}{@{\extracolsep{\fill}}lrr@{}}
\toprule
\rowcolor{TableHead}
Margin (pp) & Native + \method{} & Fast fixed + \method{}\\
\midrule
1 & \gaincell{1.68} & \gaincell{1.54}\\
2 & \gaincell{1.85} & \gaincell{1.71}\\
\rowcolor{SetCommitTint}3 & \gaincell{1.73} & \gaincell{1.48}\\
\bottomrule
\end{tabular*}
\end{table}

\begin{table}[!htbp]
\centering
\caption{ \textbf{Quality retention at exactly 64 evaluations.}
$Q$ is accuracy/pass@1 (\%). The reference is native sampling at 128 NFE
on the same model and task. Relative loss is
$100(Q_{128}-Q_{64})/Q_{128}$, distinct from the 3-point selection margin
in Table~\ref{tab:main}. Teal losses are within 5\%; bold $Q_{64}$ values
equal or exceed the reference. This same $Q_{128}$ defines
$Q_{\mathrm{ref}}$ in Table~\ref{tab:main}; losses use unrounded scores.}
\label{tab:quality64}
\small
\begin{tabular*}{\linewidth}{@{\extracolsep{\fill}}lrrrrr@{}}
\toprule
\rowcolor{TableHead}
Benchmark & Native & \multicolumn{2}{c}{Native + \method{}} &
\multicolumn{2}{c}{Fast fixed + \method{}}\\
\cmidrule(lr){3-4}\cmidrule(lr){5-6}
& $Q_{128}$ & $Q_{64}\uparrow$ & Loss (\%)$\downarrow$ & $Q_{64}\uparrow$ & Loss (\%)$\downarrow$\\
\midrule
\rowcolor{TableBand}
\multicolumn{6}{l}{\textbf{LLaDA-8B}}\\
GSM8K & 76.0 & \textbf{76.0} & \textcolor{SetCommitTeal}{0.00} & \textbf{76.0} & \textcolor{SetCommitTeal}{0.00}\\
MATH-500 & 35.4 & 34.6 & \textcolor{SetCommitTeal}{2.26} & 35.2 & \textcolor{SetCommitTeal}{0.56}\\
MBPP & 47.5 & \textbf{47.5} & \textcolor{SetCommitTeal}{0.00} & \textbf{47.5} & \textcolor{SetCommitTeal}{0.00}\\
HumanEval & 40.2 & 39.6 & \textcolor{SetCommitTeal}{1.52} & 39.6 & \textcolor{SetCommitTeal}{1.52}\\
\midrule
\rowcolor{TableBand}
\multicolumn{6}{l}{\textbf{Dream-7B}}\\
GSM8K & 80.3 & 80.1 & \textcolor{SetCommitTeal}{0.28} & 78.8 & \textcolor{SetCommitTeal}{1.89}\\
MATH-500 & 43.6 & 42.6 & \textcolor{SetCommitTeal}{2.29} & 43.2 & \textcolor{SetCommitTeal}{0.92}\\
MBPP & 59.7 & 58.8 & \textcolor{SetCommitTeal}{1.57} & 59.0 & \textcolor{SetCommitTeal}{1.18}\\
HumanEval & 54.9 & \textbf{54.9} & \textcolor{SetCommitTeal}{0.00} & 54.3 & \textcolor{SetCommitTeal}{1.11}\\
\bottomrule
\end{tabular*}
\end{table}

\clearpage
\subsection{Budget and Parameter Sensitivity}
Figure~\ref{fig:budget_sensitivity} (top left) shows GSM8K gains over
native decoding at matched NFE. Gains are largest at 16 evaluations
(10.6 pp for LLaDA and 13.0 pp for Dream), fall to 5.0/5.8 pp at 32,
and approach zero at 128. This concentrates the observed benefit in the
few-step regime.

The top-right panel reports host-relative continuation gains:
\method{} gives 5.99/4.12/1.63/1.84 pp at 12.5/25/50/75\% progress.
Subtracting the matched control leaves 1.86/0.88/0.07/0.64 pp
(Figure~\ref{fig:mechanism}), isolating the gain over that control.

The risk-weight scan uses 0/0.5/1/1.5/2 times the nominal risk strength;
the remask-ratio scan uses 0/0.05/0.10/0.20/0.30.
These sensitivity runs are executed separately from the main-policy
evaluation. The ratio bounds
attempted withdrawals relative to the net quota; eligibility, progress,
posterior-gain, and set-risk checks determine realized remasking.
The frozen Native/Fast ratio is 0.20
(Table~\ref{tab:configuration}).
Risk weights 0/0.5/1/1.5/2 yield 67.78/71.19/71.42/70.96/70.36\%
quality, showing a nonmonotonic tradeoff. Remask caps 0/0.05/0.10/0.20/0.30 yield
70.20/71.11/71.42/71.11/70.96\%; the 0.10 peak exceeds 0.20 by only
0.30 pp before rounding. These independent results are point estimates.
The cap bounds attempted withdrawals; the acceptance checks determine
the realized remasking rate.

\begin{figure}[!htb]
\centering
\includegraphics[width=\linewidth]{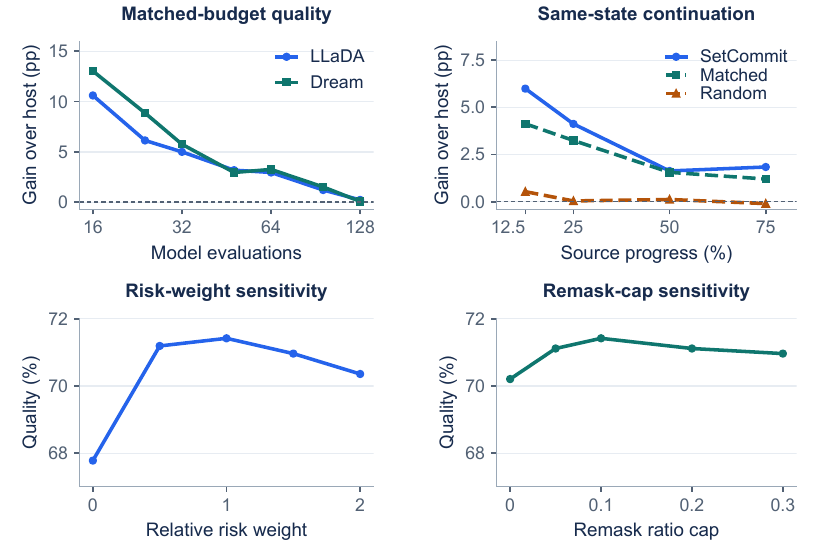}
\caption{\textbf{Budget, continuation, and parameter analyses.}
Top left: GSM8K quality gains over native hosts at matched NFE.
Top right: LLaDA-8B/GSM8K same-state continuation gains relative to
the host at 32 NFE, before forming the control differences in
Figure~\ref{fig:mechanism}. Bottom: independent relative risk-weight
and remask-ratio-cap sweeps on LLaDA-8B/GSM8K at 32 NFE; all evaluated
settings are shown with a common quality scale. These curves do not
mark a selected default.}
\label{fig:budget_sensitivity}
\end{figure}

\clearpage
\section{Main-Table Uncertainty Supplement}
\label{app:main_uncertainty}
CIs use 10{,}000 prompt-cluster bootstrap replicates for quality and mean NFE, retaining paired method/seed records. Seed variation requires separate reporting; deterministic reruns add no quality samples. Timing CIs resample matched rounds on a fixed prompt set, recomputing ratios and four-task geometric means. Intervals condition on the selected settings and reference, without configuration reselection or a quality-equivalence claim.
\begingroup
\fontsize{8.5}{10}\selectfont
\setlength{\tabcolsep}{5pt}
\renewcommand{\arraystretch}{1.04}
\setlength{\LTcapwidth}{\textwidth}
\begin{longtable}{@{}lccc@{}}
\caption{\textbf{Uncertainty supplement to Table~\ref{tab:main}.}
Entries: estimate [95\% CI]; points are copied or derived from Table 1. Timing design: 3 warm-up and 10 paired measured rounds per setting. Timing CIs cover repeat variation only. Fixed-budget NFE and native self-ratios have degenerate intervals.}
\label{tab:main_uncertainty}\\
\endfirsthead
\caption[]{\textbf{Uncertainty supplement to Table~\ref{tab:main} (continued).}
Entries: estimate [95\% CI].}\\
\endhead
\multicolumn{4}{r}{\scriptsize Continued on next page.}\\
\endfoot
\endlastfoot
\toprule
\rowcolor{TableBand}
\multicolumn{4}{l}{\textbf{A. LLaDA-8B: all main-table configurations}}\\
\rowcolor{TableHead}
Decoder & $Q$ (\%) & Mean NFE & Total speedup ($\times$)\\
\midrule
\rowcolor{TableHead}
\multicolumn{4}{l}{\textit{GSM8K} ($n=1319$)}\\
Native & $73.1\;[70.7,75.4]$ & $64.0\;[64.0,64.0]$ & $1.00\;[1.00,1.00]$\\
\rowcolor{SetCommitTint}\quad\textcolor{SetCommitBlue}{+ \method{}} & $74.2\;[71.8,76.6]$ & $48.0\;[48.0,48.0]$ & $1.27\;[1.25,1.29]$\\
Fast fixed & $73.1\;[70.7,75.4]$ & $64.0\;[64.0,64.0]$ & $2.01\;[1.96,2.06]$\\
\rowcolor{SetCommitTint}\quad\textcolor{SetCommitBlue}{+ \method{}} & $74.6\;[72.3,77.0]$ & $48.0\;[48.0,48.0]$ & $2.38\;[2.31,2.45]$\\
Fast adaptive & $73.6\;[71.2,75.9]$ & $62.2\;[61.7,62.7]$ & $2.05\;[1.98,2.12]$\\
\rowcolor{SetCommitTint}\quad\textcolor{SetCommitBlue}{+ \method{}} & $73.1\;[70.7,75.4]$ & $32.3\;[32.0,32.6]$ & $3.49\;[3.34,3.65]$\\
EB-Sampler & $73.6\;[71.2,76.0]$ & $58.5\;[57.9,59.1]$ & $1.09\;[1.05,1.13]$\\
\rowcolor{SetCommitTint}\quad\textcolor{SetCommitBlue}{+ \method{}} & $75.7\;[73.3,77.9]$ & $47.7\;[47.2,48.2]$ & $1.33\;[1.29,1.37]$\\
DAWN & $74.0\;[71.6,76.3]$ & $60.1\;[59.4,60.8]$ & $1.04\;[1.01,1.07]$\\
DOS & $73.5\;[71.0,75.9]$ & $59.9\;[59.2,60.6]$ & $1.05\;[1.03,1.07]$\\
\addlinespace[2pt]
\rowcolor{TableHead}
\multicolumn{4}{l}{\textit{MATH-500} ($n=500$)}\\
Native & $34.2\;[30.0,38.4]$ & $96.0\;[96.0,96.0]$ & $1.00\;[1.00,1.00]$\\
\rowcolor{SetCommitTint}\quad\textcolor{SetCommitBlue}{+ \method{}} & $33.8\;[29.6,38.0]$ & $48.0\;[48.0,48.0]$ & $1.90\;[1.86,1.93]$\\
Fast fixed & $34.0\;[29.8,38.2]$ & $96.0\;[96.0,96.0]$ & $1.96\;[1.92,2.01]$\\
\rowcolor{SetCommitTint}\quad\textcolor{SetCommitBlue}{+ \method{}} & $34.6\;[30.4,38.8]$ & $48.0\;[48.0,48.0]$ & $3.72\;[3.63,3.81]$\\
Fast adaptive & $32.4\;[28.2,36.6]$ & $64.8\;[63.8,65.8]$ & $2.99\;[2.92,3.06]$\\
\rowcolor{SetCommitTint}\quad\textcolor{SetCommitBlue}{+ \method{}} & $33.4\;[29.4,37.6]$ & $48.0\;[47.2,48.8]$ & $3.87\;[3.78,3.97]$\\
EB-Sampler & $32.4\;[28.4,36.6]$ & $62.9\;[61.8,64.1]$ & $1.53\;[1.50,1.56]$\\
\rowcolor{SetCommitTint}\quad\textcolor{SetCommitBlue}{+ \method{}} & $33.6\;[29.4,37.6]$ & $44.6\;[43.8,45.5]$ & $2.12\;[2.06,2.19]$\\
DAWN & $33.2\;[29.0,37.4]$ & $64.7\;[63.4,66.0]$ & $1.39\;[1.35,1.42]$\\
DOS & $32.4\;[28.2,36.4]$ & $63.0\;[61.7,64.3]$ & $1.45\;[1.42,1.49]$\\
\addlinespace[2pt]
\rowcolor{TableHead}
\multicolumn{4}{l}{\textit{MBPP} ($n=427$)}\\
Native & $46.6\;[41.9,51.3]$ & $96.0\;[96.0,96.0]$ & $1.00\;[1.00,1.00]$\\
\rowcolor{SetCommitTint}\quad\textcolor{SetCommitBlue}{+ \method{}} & $46.1\;[41.5,50.8]$ & $48.0\;[48.0,48.0]$ & $1.94\;[1.89,1.99]$\\
Fast fixed & $44.7\;[40.0,49.4]$ & $64.0\;[64.0,64.0]$ & $2.93\;[2.80,3.06]$\\
\rowcolor{SetCommitTint}\quad\textcolor{SetCommitBlue}{+ \method{}} & $46.8\;[42.2,51.5]$ & $48.0\;[48.0,48.0]$ & $3.47\;[3.35,3.59]$\\
Fast adaptive & $44.7\;[40.0,49.4]$ & $58.3\;[57.4,59.2]$ & $3.22\;[3.12,3.34]$\\
\rowcolor{SetCommitTint}\quad\textcolor{SetCommitBlue}{+ \method{}} & $45.4\;[40.7,50.1]$ & $44.9\;[44.0,45.8]$ & $3.76\;[3.66,3.86]$\\
EB-Sampler & $45.2\;[40.5,50.1]$ & $63.5\;[62.2,64.8]$ & $1.50\;[1.44,1.57]$\\
\rowcolor{SetCommitTint}\quad\textcolor{SetCommitBlue}{+ \method{}} & $45.9\;[41.5,50.8]$ & $45.8\;[44.9,46.7]$ & $1.97\;[1.91,2.03]$\\
DAWN & $45.2\;[40.5,49.9]$ & $60.5\;[59.3,61.7]$ & $1.53\;[1.48,1.59]$\\
DOS & $45.4\;[40.7,50.1]$ & $62.7\;[61.2,64.2]$ & $1.50\;[1.45,1.56]$\\
\addlinespace[2pt]
\rowcolor{TableHead}
\multicolumn{4}{l}{\textit{HumanEval} ($n=164$)}\\
Native & $39.0\;[31.7,46.3]$ & $96.0\;[96.0,96.0]$ & $1.00\;[1.00,1.00]$\\
\rowcolor{SetCommitTint}\quad\textcolor{SetCommitBlue}{+ \method{}} & $39.0\;[31.7,46.3]$ & $48.0\;[48.0,48.0]$ & $1.89\;[1.87,1.93]$\\
Fast fixed & $39.6\;[32.3,47.6]$ & $96.0\;[96.0,96.0]$ & $1.91\;[1.86,1.97]$\\
\rowcolor{SetCommitTint}\quad\textcolor{SetCommitBlue}{+ \method{}} & $38.4\;[31.1,45.7]$ & $48.0\;[48.0,48.0]$ & $3.36\;[3.26,3.46]$\\
Fast adaptive & $39.0\;[31.7,46.3]$ & $93.4\;[90.9,96.0]$ & $1.99\;[1.95,2.02]$\\
\rowcolor{SetCommitTint}\quad\textcolor{SetCommitBlue}{+ \method{}} & $38.4\;[31.1,45.7]$ & $44.8\;[43.5,46.1]$ & $3.64\;[3.42,3.87]$\\
EB-Sampler & $39.6\;[32.3,47.0]$ & $92.6\;[89.6,95.5]$ & $1.01\;[0.99,1.03]$\\
\rowcolor{SetCommitTint}\quad\textcolor{SetCommitBlue}{+ \method{}} & $39.0\;[31.7,46.4]$ & $46.4\;[44.8,48.0]$ & $1.95\;[1.88,2.03]$\\
DAWN & $38.4\;[31.1,45.7]$ & $60.9\;[58.7,63.1]$ & $1.44\;[1.39,1.49]$\\
DOS & $39.6\;[32.3,47.0]$ & $97.4\;[94.0,100.9]$ & $0.96\;[0.94,0.99]$\\
\bottomrule
\newpage
\toprule
\rowcolor{TableBand}
\multicolumn{4}{l}{\textbf{B. Dream-7B: all main-table configurations}}\\
\rowcolor{TableHead}
Decoder & $Q$ (\%) & Mean NFE & Total speedup ($\times$)\\
\midrule
\rowcolor{TableHead}
\multicolumn{4}{l}{\textit{GSM8K} ($n=1319$)}\\
Native & $78.9\;[76.6,81.0]$ & $96.0\;[96.0,96.0]$ & $1.00\;[1.00,1.00]$\\
\rowcolor{SetCommitTint}\quad\textcolor{SetCommitBlue}{+ \method{}} & $78.6\;[76.4,80.8]$ & $48.0\;[48.0,48.0]$ & $1.96\;[1.93,1.99]$\\
Fast fixed & $77.4\;[75.1,79.7]$ & $64.0\;[64.0,64.0]$ & $2.89\;[2.83,2.94]$\\
\rowcolor{SetCommitTint}\quad\textcolor{SetCommitBlue}{+ \method{}} & $78.0\;[75.7,80.2]$ & $48.0\;[48.0,48.0]$ & $3.56\;[3.48,3.64]$\\
Fast adaptive & $77.9\;[75.7,80.1]$ & $65.3\;[64.8,65.8]$ & $2.97\;[2.90,3.04]$\\
\rowcolor{SetCommitTint}\quad\textcolor{SetCommitBlue}{+ \method{}} & $77.6\;[75.4,79.9]$ & $31.2\;[30.9,31.5]$ & $5.47\;[5.32,5.63]$\\
EB-Sampler & $78.2\;[75.9,80.4]$ & $61.9\;[61.3,62.5]$ & $1.52\;[1.47,1.57]$\\
\rowcolor{SetCommitTint}\quad\textcolor{SetCommitBlue}{+ \method{}} & $77.7\;[75.4,79.9]$ & $31.6\;[31.3,31.9]$ & $2.89\;[2.77,3.01]$\\
DAWN & $78.2\;[75.9,80.4]$ & $60.7\;[60.0,61.3]$ & $1.52\;[1.49,1.55]$\\
DOS & $78.2\;[75.9,80.4]$ & $61.8\;[61.0,62.6]$ & $1.48\;[1.45,1.52]$\\
\addlinespace[2pt]
\rowcolor{TableHead}
\multicolumn{4}{l}{\textit{MATH-500} ($n=500$)}\\
Native & $42.6\;[38.4,47.0]$ & $96.0\;[96.0,96.0]$ & $1.00\;[1.00,1.00]$\\
\rowcolor{SetCommitTint}\quad\textcolor{SetCommitBlue}{+ \method{}} & $41.2\;[37.0,45.6]$ & $48.0\;[48.0,48.0]$ & $1.92\;[1.89,1.96]$\\
Fast fixed & $42.8\;[38.4,47.2]$ & $96.0\;[96.0,96.0]$ & $2.01\;[1.97,2.05]$\\
\rowcolor{SetCommitTint}\quad\textcolor{SetCommitBlue}{+ \method{}} & $42.8\;[38.4,47.0]$ & $48.0\;[48.0,48.0]$ & $3.77\;[3.66,3.87]$\\
Fast adaptive & $42.4\;[38.2,46.6]$ & $92.2\;[91.0,93.4]$ & $2.09\;[2.03,2.15]$\\
\rowcolor{SetCommitTint}\quad\textcolor{SetCommitBlue}{+ \method{}} & $41.8\;[37.6,46.0]$ & $46.7\;[45.9,47.5]$ & $3.96\;[3.83,4.09]$\\
EB-Sampler & $41.0\;[36.8,45.2]$ & $64.1\;[63.0,65.3]$ & $1.52\;[1.49,1.54]$\\
\rowcolor{SetCommitTint}\quad\textcolor{SetCommitBlue}{+ \method{}} & $42.0\;[37.6,46.2]$ & $44.8\;[43.9,45.7]$ & $2.10\;[2.04,2.16]$\\
DAWN & $41.0\;[36.8,45.4]$ & $64.8\;[63.7,65.9]$ & $1.44\;[1.41,1.47]$\\
DOS & $41.0\;[36.8,45.4]$ & $62.7\;[61.3,64.1]$ & $1.50\;[1.46,1.55]$\\
\addlinespace[2pt]
\rowcolor{TableHead}
\multicolumn{4}{l}{\textit{MBPP} ($n=427$)}\\
Native & $57.4\;[52.7,62.1]$ & $64.0\;[64.0,64.0]$ & $1.00\;[1.00,1.00]$\\
\rowcolor{SetCommitTint}\quad\textcolor{SetCommitBlue}{+ \method{}} & $58.3\;[53.6,63.0]$ & $48.0\;[48.0,48.0]$ & $1.25\;[1.22,1.28]$\\
Fast fixed & $57.6\;[52.9,62.3]$ & $64.0\;[64.0,64.0]$ & $1.92\;[1.86,1.98]$\\
\rowcolor{SetCommitTint}\quad\textcolor{SetCommitBlue}{+ \method{}} & $57.6\;[52.9,62.3]$ & $32.0\;[32.0,32.0]$ & $3.39\;[3.28,3.51]$\\
Fast adaptive & $57.6\;[52.9,62.3]$ & $59.3\;[58.4,60.2]$ & $2.15\;[2.09,2.22]$\\
\rowcolor{SetCommitTint}\quad\textcolor{SetCommitBlue}{+ \method{}} & $59.0\;[54.3,63.7]$ & $43.9\;[43.0,44.8]$ & $2.59\;[2.46,2.72]$\\
EB-Sampler & $58.1\;[53.4,62.8]$ & $60.5\;[59.2,61.8]$ & $1.07\;[1.04,1.10]$\\
\rowcolor{SetCommitTint}\quad\textcolor{SetCommitBlue}{+ \method{}} & $57.4\;[52.7,62.1]$ & $32.4\;[31.7,33.1]$ & $1.83\;[1.76,1.90]$\\
DAWN & $58.3\;[53.6,63.0]$ & $62.7\;[61.5,63.9]$ & $0.99\;[0.98,1.01]$\\
DOS & $57.8\;[53.2,62.5]$ & $60.7\;[59.1,62.3]$ & $1.03\;[0.98,1.07]$\\
\addlinespace[2pt]
\rowcolor{TableHead}
\multicolumn{4}{l}{\textit{HumanEval} ($n=164$)}\\
Native & $54.3\;[47.0,62.2]$ & $96.0\;[96.0,96.0]$ & $1.00\;[1.00,1.00]$\\
\rowcolor{SetCommitTint}\quad\textcolor{SetCommitBlue}{+ \method{}} & $53.7\;[46.3,61.0]$ & $48.0\;[48.0,48.0]$ & $1.93\;[1.89,1.96]$\\
Fast fixed & $52.4\;[45.1,59.8]$ & $64.0\;[64.0,64.0]$ & $2.83\;[2.76,2.89]$\\
\rowcolor{SetCommitTint}\quad\textcolor{SetCommitBlue}{+ \method{}} & $53.7\;[46.3,61.0]$ & $48.0\;[48.0,48.0]$ & $3.35\;[3.27,3.43]$\\
Fast adaptive & $54.3\;[47.0,61.6]$ & $93.5\;[90.4,96.7]$ & $2.11\;[2.06,2.17]$\\
\rowcolor{SetCommitTint}\quad\textcolor{SetCommitBlue}{+ \method{}} & $53.7\;[45.7,61.0]$ & $45.6\;[44.1,47.1]$ & $3.58\;[3.44,3.71]$\\
EB-Sampler & $53.0\;[45.7,61.0]$ & $63.2\;[61.1,65.4]$ & $1.49\;[1.44,1.54]$\\
\rowcolor{SetCommitTint}\quad\textcolor{SetCommitBlue}{+ \method{}} & $52.4\;[44.5,59.8]$ & $58.6\;[56.6,60.6]$ & $1.56\;[1.48,1.63]$\\
DAWN & $52.4\;[44.5,59.8]$ & $61.1\;[58.7,63.5]$ & $1.48\;[1.43,1.53]$\\
DOS & $52.4\;[45.1,60.4]$ & $63.2\;[60.6,65.9]$ & $1.44\;[1.40,1.50]$\\
\bottomrule
\newpage
\toprule
\rowcolor{TableBand}
\multicolumn{4}{l}{\textbf{C. Paired contrasts at the selected operating points}}\\
\rowcolor{TableHead}
Host $B$ & $\Delta Q$ (pp) & $S_{\mathrm{inc}}$ ($\times$) & Timing rounds\\
\midrule
\rowcolor{TableHead}
\multicolumn{4}{l}{\textit{LLaDA-8B / GSM8K}}\\
Native & $+1.1\;[-0.5,+2.8]$ & $1.27\;[1.25,1.29]$ & 10\\
Fast fixed & $+1.5\;[-0.2,+3.2]$ & $1.18\;[1.15,1.22]$ & 10\\
Fast adaptive & $-0.5\;[-2.4,+1.4]$ & $1.70\;[1.64,1.76]$ & 10\\
EB-Sampler & $+2.0\;[+0.1,+3.9]$ & $1.22\;[1.19,1.25]$ & 10\\
\addlinespace[2pt]
\rowcolor{TableHead}
\multicolumn{4}{l}{\textit{LLaDA-8B / MATH-500}}\\
Native & $-0.4\;[-3.2,+2.4]$ & $1.90\;[1.86,1.93]$ & 10\\
Fast fixed & $+0.6\;[-2.4,+3.6]$ & $1.90\;[1.84,1.96]$ & 10\\
Fast adaptive & $+1.0\;[-2.2,+4.2]$ & $1.29\;[1.25,1.34]$ & 10\\
EB-Sampler & $+1.2\;[-2.2,+4.4]$ & $1.39\;[1.34,1.43]$ & 10\\
\addlinespace[2pt]
\rowcolor{TableHead}
\multicolumn{4}{l}{\textit{LLaDA-8B / MBPP}}\\
Native & $-0.5\;[-3.5,+2.6]$ & $1.94\;[1.89,1.99]$ & 10\\
Fast fixed & $+2.1\;[-1.2,+5.6]$ & $1.18\;[1.15,1.22]$ & 10\\
Fast adaptive & $+0.7\;[-3.0,+4.2]$ & $1.17\;[1.13,1.21]$ & 10\\
EB-Sampler & $+0.7\;[-2.8,+4.0]$ & $1.31\;[1.27,1.36]$ & 10\\
\addlinespace[2pt]
\rowcolor{TableHead}
\multicolumn{4}{l}{\textit{LLaDA-8B / HumanEval}}\\
Native & $+0.0\;[-5.5,+5.5]$ & $1.89\;[1.87,1.93]$ & 10\\
Fast fixed & $-1.2\;[-6.7,+4.3]$ & $1.76\;[1.69,1.82]$ & 10\\
Fast adaptive & $-0.6\;[-6.7,+5.5]$ & $1.83\;[1.73,1.94]$ & 10\\
EB-Sampler & $-0.6\;[-6.7,+5.5]$ & $1.93\;[1.85,2.04]$ & 10\\
\addlinespace[2pt]
\rowcolor{TableHead}
\multicolumn{4}{l}{\textit{Dream-7B / GSM8K}}\\
Native & $-0.3\;[-1.9,+1.3]$ & $1.96\;[1.93,1.99]$ & 10\\
Fast fixed & $+0.6\;[-1.2,+2.4]$ & $1.23\;[1.20,1.27]$ & 10\\
Fast adaptive & $-0.3\;[-2.3,+1.7]$ & $1.84\;[1.77,1.91]$ & 10\\
EB-Sampler & $-0.5\;[-2.4,+1.4]$ & $1.90\;[1.82,1.97]$ & 10\\
\addlinespace[2pt]
\rowcolor{TableHead}
\multicolumn{4}{l}{\textit{Dream-7B / MATH-500}}\\
Native & $-1.4\;[-4.0,+1.2]$ & $1.92\;[1.89,1.96]$ & 10\\
Fast fixed & $+0.0\;[-3.2,+3.2]$ & $1.88\;[1.83,1.92]$ & 10\\
Fast adaptive & $-0.6\;[-4.0,+2.6]$ & $1.89\;[1.81,1.97]$ & 10\\
EB-Sampler & $+1.0\;[-2.2,+4.2]$ & $1.38\;[1.35,1.42]$ & 10\\
\addlinespace[2pt]
\rowcolor{TableHead}
\multicolumn{4}{l}{\textit{Dream-7B / MBPP}}\\
Native & $+0.9\;[-2.3,+4.4]$ & $1.25\;[1.22,1.28]$ & 10\\
Fast fixed & $+0.0\;[-3.5,+3.3]$ & $1.77\;[1.71,1.84]$ & 10\\
Fast adaptive & $+1.4\;[-2.1,+4.9]$ & $1.20\;[1.14,1.28]$ & 10\\
EB-Sampler & $-0.7\;[-4.5,+3.3]$ & $1.71\;[1.64,1.78]$ & 10\\
\addlinespace[2pt]
\rowcolor{TableHead}
\multicolumn{4}{l}{\textit{Dream-7B / HumanEval}}\\
Native & $-0.6\;[-6.1,+4.9]$ & $1.93\;[1.89,1.96]$ & 10\\
Fast fixed & $+1.2\;[-4.3,+6.7]$ & $1.18\;[1.15,1.22]$ & 10\\
Fast adaptive & $-0.6\;[-6.7,+4.9]$ & $1.70\;[1.62,1.78]$ & 10\\
EB-Sampler & $-0.6\;[-6.7,+5.5]$ & $1.05\;[1.00,1.10]$ & 10\\
\addlinespace[2pt]
\midrule
\rowcolor{TableBand}
\multicolumn{4}{l}{\textbf{D. Four-task geometric-mean incremental speedup (Mean extra)}}\\
\rowcolor{TableHead}
Host $B$ & LLaDA-8B ($\times$) & Dream-7B ($\times$) & Rounds/task\\
\midrule
Native & $1.72\;[1.71,1.74]$ & $1.74\;[1.72,1.75]$ & 10\\
Fast fixed & $1.47\;[1.45,1.49]$ & $1.48\;[1.46,1.50]$ & 10\\
Fast adaptive & $1.47\;[1.45,1.50]$ & $1.63\;[1.59,1.68]$ & 10\\
EB-Sampler & $1.44\;[1.41,1.47]$ & $1.47\;[1.45,1.50]$ & 10\\
\bottomrule
\end{longtable}
\endgroup
\noindent\footnotesize In panel C, $\Delta Q=Q_{B+\method{}}-Q_B$ and
$S_{\mathrm{inc}}=L_B/L_{B+\method{}}$. Quality differences use task-score counts
consistent with the displayed percentages; differences of rounded cells may vary
by 0.1 pp. These are matched-target comparisons, not fixed-NFE ablations.
\normalsize

\end{document}